\def\eqref#1{equation~\ref{#1}}
\def\1{\bm{1}}
\DeclareMathAlphabet{\mathsfit}{\encodingdefault}{\sfdefault}{m}{sl}
\SetMathAlphabet{\mathsfit}{bold}{\encodingdefault}{\sfdefault}{bx}{n}
\newcommand{\y}{y}
\newcommand{\z}{z}
\newcommand{\boldy}{\boldsymbol{y}}
\newcommand{\boldz}{\boldsymbol{z}}
\newcommand{\boldx}{\boldsymbol{x}}
\newcommand{\boldSigma}{\boldsymbol{\Sigma}}
\newcommand{\X}{\mathrm{\mathbf{X}}}
\newcommand{\Y}{\mathrm{\mathbf{Y}}}
\newcommand{\boldxtilde}{\tilde{\boldx}}
\newcommand{\boldxcheck}{\check{\boldx}}
\newcommand{\xtilde}{\tilde{x}}
\newcommand{\xcheck}{\check{x}}
\newcommand{\bigU}{\mathbf{U}}
\newcommand{\bigP}{\mathbf{P}}
\newcommand{\bigPtilde}{\tilde{\mathbf{P}}}
\newcommand{\bigS}{\mathbf{S}}
\newcommand{\bigW}{\mathbf{W}}
\newcommand{\bigWtilde}{\tilde{\mathbf{W}}}
\newcommand{\bigWtildesc}{\tilde{\mathrm{W}}}
\newcommand{\bigUtilde}{\tilde{\mathbf{U}}}
\newcommand{\bigVtilde}{\tilde{\mathbf{V}}}
\newcommand{\bigM}{\mathbf{M}}
\newcommand{\bigA}{\mathbf{A}}
\newcommand{\bigAtilde}{\tilde{\mathbf{A}}}
\newcommand{\bigAsc}{\mathrm{A}}
\newcommand{\bigAtildesc}{\tilde{\mathrm{A}}}
\newcommand{\bigB}{\mathbf{B}}
\newcommand{\bigBsc}{\mathrm{B}}
\newcommand{\bigBtilde}{\tilde{\mathbf{B}}}
\newcommand{\bigQ}{\mathbf{Q}}
\newcommand{\bigC}{\mathbf{C}}
\newcommand{\bigCsc}{\mathrm{C}}
\newcommand{\bigD}{\mathbf{D}}
\newcommand{\bigDsc}{\mathrm{D}}
\newcommand{\bigV}{\mathbf{V}}
\newcommand{\calU}{\mathcal{U}}
\newcommand{\calV}{\mathcal{V}}
\newcommand{\calW}{\mathcal{W}}
\newcommand{\calL}{\mathcal{L}}
\newcommand{\calA}{\mathcal{A}}
\newcommand{\calB}{\mathcal{B}}
\newcommand{\boldb}{\boldsymbol{b}}
\newcommand{\boldSigmaxz}{\boldSigma_{\boldx, \boldz}}
\newcommand{\boldSigmaWxytilde}{\bigWtilde\boldSigma_{\boldxtilde, \boldy}}
\newcommand{\boldSigmaxy}{\boldSigma_{\boldx, \boldy}}
\newcommand{\boldSigmaxx}{\boldSigma_{\boldx, \boldx}}
\newcommand{\boldSigmaWxz}{\bigW\boldSigma_{\boldx, \boldz}}
\newcommand{\boldSigmaWxztilde}{\bigWtilde\boldSigma_{\boldxtilde, \boldz}}
\newcommand{\boldSigmaWxy}{\bigW\boldSigma_{\boldx, \boldy}}
\newcommand{\T}{\mathrm{T}}
\newcommand{\identity}{\boldsymbol{I}}
\newcommand{\rank}{\mathrm{rank}}
\newcommand{\colsp}{\mathrm{colsp}}
\newcommand{\bigzero}{\mathbf{0}}
\newcommand{\boldbeta}{\boldsymbol{\beta}}
\newcommand{\stereotype}{\mathrm{stereo}}
\newcommand{\prostereotype}{\mathrm{pro-stereo}}
\newcommand{\fact}{\mathrm{fact}}
\newcommand{\betastereo}{\hat{\beta}_{\stereotype}}
\newcommand{\betafact}{\hat{\beta}_{\fact}}
\newcommand{\M}{\mathcal{M}}
\newcommand{\he}{\mathrm{he}}
\newcommand{\she}{\mathrm{she}}
\newcommand{\boldt}{\boldsymbol{t}}
\newcommand{\boldtheta}{\boldsymbol{\theta}}
\newcommand{\prof}{\mathrm{prof}}
\newcommand{\profession}{\mathrm{profession}}
\newcommand{\gender}{\mathrm{gender}}
\newcommand{\lang}{\mathrm{lang}}
\newcommand{\tox}{\mathrm{tox}}
\newcommand{\smiling}{\mathrm{smiling}}
\newcommand{\glasses}{\mathrm{glasses}}
\def\affil#1{\textsuperscript{#1}}
\def\UVA{\affil{$\dagger$}}
\def\CDS{\affil{$\diamond$}}
\def\TI{\affil{$\ddagger$}}
\newcommand{\R}{\mathbb{R}}
\newcommand{\E}{\mathbb{E}}
\newcommand{\Cov}{\mathrm{Cov}}
\newcommand{\Var}{\mathrm{Var}}
\newcommand{\vareps}{\boldsymbol{\epsilon}}
\DeclareMathOperator*{\argmin}{arg\,min}
\newcommand{\littlespace}{\hspace{0.2cm}}
\newtheorem{lemma}{Lemma}
\newtheorem{theorem}{Theorem}
\crefname{theorem}{theorem}{theorems}
\Crefname{theorem}{Theorem}{Theorems}
\title{Preserving Task-Relevant Information Under Linear Concept Removal}
\author{%
  Floris Holstege \thanks{Equal contribution. Correspondence to \texttt{f.g.holstege@uva.nl}.}$^{\hspace{0.1cm}}$\UVA\TI
   \hspace{0.25cm} Shauli Ravfogel$^*$\CDS \hspace{0.25cm} Bram Wouters\UVA \\
  University of Amsterdam, Department of Quantitative Economics\UVA \\ 
  New York University, Center for Data Science \CDS \\
  Tinbergen Institute \TI
}
\newcommand{\@trackname}{Conference on Neural Information Processing Systems (NeurIPS 2025).}
\begin{document}
\maketitle

\begin{abstract}
     Modern neural networks often encode unwanted concepts alongside task-relevant information, leading to fairness and interpretability concerns. Existing post-hoc approaches can remove undesired concepts but often degrade useful signals. We introduce SPLINCE—Simultaneous Projection for LINear concept removal and Covariance prEservation—which eliminates sensitive concepts from representations while exactly preserving their covariance with a target label. SPLINCE achieves this via an oblique projection that ``splices out'' the unwanted direction yet protects important label correlations. Theoretically, it is the unique solution that removes linear concept predictability and maintains target covariance with minimal embedding distortion. Empirically, SPLINCE outperforms baselines on benchmarks such as Bias in Bios and Winobias, removing protected attributes while minimally damaging main-task information.
\end{abstract}

\section{Introduction}

Deep neural networks (DNNs), including Language Models (LMs), have achieved great success in natural language processing (NLP) by learning rich representations of text, often referred to as embeddings \citep{cao2024recentadvancestextembedding, wang2024improvingtextembeddingslarge}. These embeddings were shown to also encode undesired information, such as markers of gender, leading to biased predictions \citep{bolukbasi_2016}. In response, a variety of concept-removal methods has been developed to remove undesired information from embeddings. Examples of such methods  are iterative nullspace projection (INLP, \citet{ravfogel_2020}), Linear adversarial concept erasure (RLACE, \citet{rafvogel_2022}), Spectral Attribute Removal (SAL, \citet{shao-etal-2023-gold}), and Least-squares Concept Erasure (LEACE, \citet{belrose_2023}). The shared objective of these methods is to make a concept---such as gender----undetectable by any linear classifier, while preserving the original embeddings as much as possible.

Previous work has noted that a drawback of post-hoc concept-removal methods is that in addition to removing a particular concept, they tend to also eliminate other concepts and information from embeddings \citep{feder2021causalm, Belinkov_2021,  Kumar_2022, guerner2025geometricnotioncausalprobing, ravfogel2025gumbelcounterfactualgenerationlanguage}. Consider, for instance, a scenario where we wish to remove the effect of gender markers on a classifier that screens CVs for job applications. Naively applying concept-erasure techniques to removes gender markers from the input representations may inadvertently harm the model’s performance on the primary task of profession prediction, since in real-world data, certain professions are strongly associated with gender. As a result, the erasure may distort relevant information, undermining both interpretability and utility.

In this paper, we seek to address a key drawback of post-hoc concept-removal methods. Our contribution is to introduce SPLINCE, a projection that (similar to LEACE or SAL) prevents any linear classifier from predicting a concept, while also preserving the covariance with a task of interest. Mathematically, we construct an oblique projection that places the covariance between the representations and a protected attribute in its \emph{kernel}, while maintaining the covariance between the representations and the main-task label in its \emph{range}.

We prove that if a linear classifier is re-fitted after projection without regularization, \emph{any} two projections that share the same kernel (i.e., that linearly erase the same subspace) will induce identical loss. In that sense, SPLINCE and previous methods such as LEACE \citep{belrose_2023} (as well as more naive versions that do not explicitly aim to maintain the \emph{minimality} of the projection) are all equivalent. For causal model interventions aiming to interpret its behavior---a situation where the underlying model is necessarily frozen---we argue that SPLINCE may perform a more surgical intervention, akin to minimizing the side effects of erasure on related concepts (e.g., removing \emph{gender bias} while preserving \emph{grammatical gender}). Empirically, we show that in a realistic classification setting, SPLINCE improves fairness in a highly challenging, imbalanced scenario, and removes stereotypes while maintaining correlated factual information.

\section{Related work}
\label{sec:related_work}
\textbf{Concept-removal}: in response to growing concerns about DNNs relying on problematic or harmful concepts, a range of adversarial methods were  developed to remove concepts from the embeddings of neural networks \citep{Graham_2017, Lemoine_2018}. However, these methods were later deemed unsuccessful at removing concepts \citep{elazar-goldberg-2018-adversarial}. Subsequently, many works (including this) focused on preventing a linear classifier from predicting a concept as a more tractable alternative. This line of work is supported by the \textit{linear subspace hypothesis} \citep{bolukbasi_2016}, which argues that concepts are represented in linear subspaces of embeddings (for a more elaborate discussion, see \citet{pmlr-v235-park24c}). 

\textbf{Existing linear concept-removal methods}: iterative nullspace projection (INLP, \citet{ravfogel_2020}) trains a linear classifier to predict a concept, and projects embeddings to the nullspace of the parameters of the linear classifier. 
This is repeated until the concept can no longer be predicted by the linear classifier. Relaxed Linear Adversarial Concept Erasure (RLACE, \citet{rafvogel_2022}) trains an orthogonal projection matrix such that a concept cannot be predicted by a linear classifier.
These works were followed up by Least-squares Concept Erasure (LEACE, \citet{belrose_2023}), which ensures that no linear classifier can predict the concept (hereafter referred to as \textit{linear guardedness}) while minimally altering the embeddings.
Spectral attribute removal (SAL, \citet{shao-etal-2023-gold}) projects the embeddings orthogonal to the first $k$ eigenvectors of the covariance matrix $\Cov(\boldx, \boldz)$. 
Mean Projection (MP, \citep{haghighatkhah-etal-2022-better}) projects embeddings to the nullspace of the mean difference between embeddings with and without concepts. It is equivalent to SAL when the concept is binary. SAL and MP also guarantee linear guardedness (similar to LEACE), whereas other methods may or may not satisfy this criterion.

\textbf{Linear concept-removal while preserving task-relevant information}: previous work suggests that linear concept-removal methods remove task-relevant information in addition to the concept they seek to remove \citep{Belinkov_2021, Kumar_2022,  guerner2025geometricnotioncausalprobing}.
In response, several alternatives have been proposed to address this issue, all removing a different linear subspace \citep{dev-etal-2021-oscar, pmlr-v235-holstege24a, Bareeva2024ReactiveMC,shi-etal-2024-debiasing}. However, each of these approaches sacrifices linear guardedness in order to retain more task relevant information. An alternative approach is to explicitly optimize for fairness while maintaining task performance \citep{shen2021contrastive}. However, this approach is more resource-intensive, as it cannot be applied to the frozen representations of a pretrained model. Moreover, because it modifies the original representations, it is unsuitable for scenarios where the intervention is intended to simulate causal experiments on the behavior of a pretrained LM, as discussed in \cref{sec:llms}.

In this paper, we study how to retain task-relevant information while maintaining linear guardedness. Recent work has also focused on applying projections to parameters of DNNs instead of embeddings \citep{limisiewicz2024debiasingalgorithmmodeladaptation, limisiewicz2025dualdebiasingremovestereotypes, arditi2024refusallanguagemodelsmediated}. This is outside of the scope of this paper.

\section{Theory}
\label{sec:theory}

We consider random vectors $\boldx \in \mathbb{R}^d$ and $\boldz \in \mathcal{Z}.$ Here, $\boldx$ can be any vector of features, but should generally be thought of as embeddings of a deep neural network. In most cases we consider, they are the last-layer embeddings. The vector $\boldz$ represents the concept to be removed. It can be a binary or one-hot-encoded label, or continuous in the case of a regression setting.

The general idea of linear concept removal is to apply an affine transformation $r(\boldx) = \bigP \boldx + \boldb,$ where $\bigP \in \mathbb{R}^{d \times d}$ and $\boldb \in \mathbb{R}^d,$ that prevents classifiers from recovering the concept represented by $\boldz$ from the features $\boldx.$ A special case of this objective aims to achieve \emph{linear guardedness} \citep{Ravfogel2023, belrose_2023}, the inability of \emph{linear} classifiers to predict the concept. Concretely, they show that linear guardedness is equivalent to zero covariance between the transformed features and the concept to be removed, i.e., $\Cov(r(\boldx), \boldz) = \bigP \boldSigmaxz = \bigzero,$ where $\boldSigmaxz = \Cov(\boldx, \boldz)$ is the cross-covariance matrix of $\boldx$ and $\boldz,$ and the symbol $\bigzero$ can refer both to a zero vector and zero matrix. This condition, to which we will refer as the {\it kernel constraint}, only requires the kernel of $\bigP$ to contain the column space $\colsp(\boldSigmaxz) \subseteq \mathbb{R}^d.$ The intuition is that $\bigP$ removes directions in the feature space that are linearly correlated with $\boldz,$ making it impossible for linear classifiers to use the transformed features to predict $\boldz$. Importantly, this means that the requirement of linear guardedness does not uniquely determine the affine transformation. \citet{belrose_2023} use this freedom to minimize the impact of the transformation on the  distance between the original and projected representations, driven by the intuition that the minimal-norm projection would minimally damage \emph{other} semantic information encoded therein.

This problem turns out to have a closed-form solution: \citet{belrose_2023} show that for centered data, i.e., $\E[\boldx] = \bigzero,$ the constrained optimization problem
\begin{equation} \label{eq:LEACE_optimization_problem}
    \argmin_{\bigP \in \mathbb{R}^{d \times d}} \E\left[ \big\|  \bigP \boldx  - \boldx  \big\|^2_{\bigM} \right], \qquad \bigP \boldSigmaxz = \bigzero
\end{equation}
has solution $\bigP^{\star}_\mathrm{LEACE} = \bigW^{+}\bigU \bigU^{\T} \bigW,$ where $\bigW = (\boldSigmaxx^{1/2})^+$ is a whitening matrix and $\bigU$ is a matrix whose orthonormal columns span the orthogonal complement of $\colsp (\bigW\boldSigmaxz),$ which is the column space of the covariance matrix between $\boldx$ and $\boldz$ after whitening. Here, we denote by $\boldSigmaxx \in \mathbb R^{d \times d}$ the variance-covariance matrix of $\boldx,$ by $\bigA^+$ the Moore-Penrose pseudoinverse of a matrix $\bigA,$ and by $\bigA^{1/2}$ the p.s.d.~square root of a p.s.d.~matrix $\bigA.$ The resulting transformation is an oblique projection with kernel $\colsp (\boldSigmaxz)$ and range determined by the whitening matrix $\bigW.$ The intuition is that this is the smallest possible kernel that satisfies the kernel constraint in \eqref{eq:LEACE_optimization_problem}, while the chosen range minimizes the distortion caused by a projection with this kernel.

\subsection{SPLINCE: Ensuring linear guardedness while preserving task-relevant covariance} \label{sec:SPLINCE}
The LEACE projection ensures linear guardedness while minimizing the distortion of the features, but is oblivious of the main task of the model. A small expected norm squared may not optimally preserve information that is actually useful for the task at hand. Suppose now there is a random vector $\boldy \in \mathcal{Y}$ that represents the task-relevant information. Similar to the concept vector $\boldz,$ it can be binary, one-hot encoded or continuous. We conjecture that task-relevant information in the features $\boldx$ is located in the directions that linearly covariate with $\boldy.$ In other words, it is located in the column space of the covariance matrix $\boldSigmaxy = \Cov(\boldx, \boldy)$. Indeed, \emph{removing} this particular subspace completely prevents linear classification \citep{belrose_2023}. 

In order to preserve this task-relevant information, we require the affine transformation $r(\boldx) = \bigP \boldx + \boldb$ not only to produce features that are linearly guarded for $\boldz,$ but also to leave the covariance between $\boldx$ and $\boldy$ invariant. For this approach we cast the name SPLINCE (Simultaneous Projection for LINear concept removal and Covariance prEservation), which can be seen as an extension of LEACE. It eliminates sensitive concepts from representations while exactly preserving their covariance with a target label. The SPLINCE optimization problem is formulated in Theorem \ref{thm:SPLINCE} and its solution is given by \eqref{eq:solution_SPLINCE}, which is the main theoretical contribution of this paper.

\begin{theorem} \label{thm:SPLINCE}
    Let $\boldx$ and $\boldz, \boldy$ be random vectors with finite second moments, non-zero covariances between $\boldx$ and $\boldz,$ and between $\boldx$ and $\boldy,$ and $\E[\boldx] = \bigzero$. Let $\bigW = (\boldSigmaxx^{1/2})^+$ be a whitening matrix. Define linear subspaces $\calU^\perp = \colsp (\bigW\boldSigmaxz)$ and $\calV = \colsp (\bigW\boldSigmaxy) + \calU^-,$ where $\calU^- = \calU \cap ( \colsp ( \boldSigmaWxz ) + \colsp ( \boldSigmaWxy ) )^\perp.$ Assume $\calU^\perp \cap \colsp (\bigW\boldSigmaxy) = \{\mathbf{0}\}.$ Then the optimization problem
    \begin{equation} \label{eq:objective_LEACE}
      \argmin_{\bigP \in \mathbb{R}^{d \times d}} \E\left[ \big\|  \bigP \boldx  - \boldx  \big\|^2_{\bigM} \right]
    \end{equation}
    subject to the two constraints
    \begin{equation}
       \bigP \boldSigmaxz = \mathbf{0}, \qquad and \qquad \bigP \boldSigmaxy = \boldSigmaxy,
    \end{equation}
     to be referred to as the kernel and range constraint, respectively, has the solution
    \begin{equation} \label{eq:solution_SPLINCE}
        \bigP^{\star}_\mathrm{SPLINCE} = \bigW^{+}\bigV (\bigU^{\T} \bigV)^{-1} \bigU^{\T} \bigW,
    \end{equation}
       where $\bigU$ and $\bigV$ are matrices whose orthonormal columns span $\calU$ and $\calV,$ respectively.
\end{theorem}

The proof of Theorem \ref{thm:SPLINCE} is given in Appendix \ref{app:proof_SPLINCE}. Compared to the LEACE optimization problem, the only difference is the additional condition of preserving task-relevant information, $\bigP \boldSigmaxy = \boldSigmaxy,$ which we refer to as the {\it range constraint}. Similar to LEACE, $\bigP^{\star}_\mathrm{SPLINCE}$ is an oblique transformation with kernel $\colsp (\boldSigmaxz).$ The difference lies in the range, which now contains $\colsp (\boldSigmaxy)$ in order to fulfill the range constraint. Intuitively, the freedom that the LEACE optimization problem gives to the choice of the range is partially used to preserve the task-relevant information, i.e., the covariance between $\boldx$ and $\boldy.$ The remainder of the freedom is used to minimize the distortion caused by the affine transformation, leading to whitening and unwhitening, similar to LEACE.

The main assumption of Theorem \ref{thm:SPLINCE} is that $\calU^\perp \cap \colsp (\bigW\boldSigmaxy) = \{\mathbf{0}\}$, which is satisfied as long as the subspaces spanned by $\Cov(\boldx, \boldz)$ and $\Cov(\boldx, \boldy)$ do not perfectly overlap. For the case where $\boldz$ and $\boldy$ are binary variables, this assumption is equivalent to requiring that the covariance vectors $\Cov(\boldx, \boldz)$ and $\Cov(\boldx, \boldy)$ are linearly independent (i.e., not proportional).

We note that, perhaps counter-intuitively, SPLINCE is not necessarily equivalent to LEACE if $\colsp(\boldSigmaxz)$ and $\colsp(\boldSigmaxy)$ are orthogonal subspaces. Only if those subspaces are orthogonal {\it after} whitening, SPLINCE and LEACE are equivalent. In that case the orthogonal projection of LEACE then already contains the task-relevant directions $\colsp (\bigW\boldSigmaxy)$ in its range. We also note that, similar to LEACE \citep{belrose_2023}, in the case of non-centered data, i.e., $\E[\boldx] \neq \bigzero,$ the optimal affine transformation requires the addition of a constant $\boldb^{\star}_\mathrm{SPLINCE} = \E[\boldx] - \bigP^{\star}_\mathrm{SPLINCE} \E[\boldx].$ Finally, in Figure \ref{fig:illustrate} we give a  visual illustration of the steps of the projection matrix suggested by Theorem \ref{thm:SPLINCE}.

\begin{figure}[H]
    \centering
    \vspace{-0.25cm}
    \includegraphics[scale=0.4]{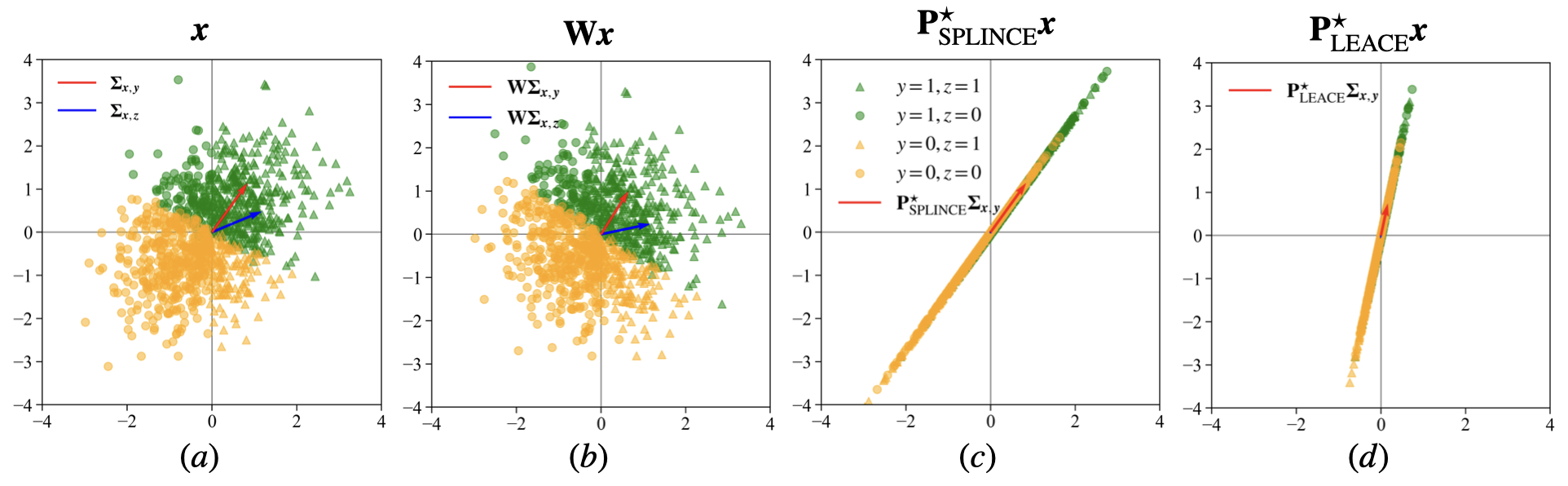}
    \vspace{-0.25cm}
    \caption{Illustration of the different steps for the projection suggested by Theorem \ref{thm:SPLINCE} on two-dimensional data.  The data $(a)$ is whitened $(b)$. Then, we use   $\bigV (\bigU^{\T} \bigV)^{-1} \bigU^{\T}$ to project parallel to $\bigW\boldSigmaxz$ onto $\bigW\boldSigmaxy$, and subsequently unwhiten $(c)$. With LEACE, the $\boldSigmaxy$ is altered $(d)$. }
    \label{fig:illustrate}
\end{figure}

\subsection{Last-layer linear concept removal with re-training} \label{sec:with_re-training}
The first use case of SPLINCE we consider is linear concept removal applied to the embeddings of a DNN, after which a linear classifier is fitted on the transformed embeddings. This can be useful if it is demanded that a predictive model does not make use of sensitive concepts, like gender or race. 

If we compare SPLINCE with other concept removal methods that guarantee linear guardedness, namely LEACE and SAL \citep{shao-etal-2023-gold}, we observe that they are all projections with the same kernel $\colsp(\boldSigmaxz).$ They differ in the choice of the range. Interestingly, we find that the predictions of a linear classifier that is trained without regularization on the transformed embeddings are not affected by this choice of the range. In other words, all concept removal methods that ensure linear guardedness will lead to the same predictions after re-training a linear classifier without regularization. We verify this empirically in Appendix \ref{app:regularization}. 

This result is formalized in Theorem \ref{thm:with_re-training_no_regularisation}, in which we consider training a model $f(\boldx; \boldtheta)$ that only depends on the embeddings $\boldx$ and parameters $\boldtheta$ through their inner product. Examples of such models are linear and logistic regression, the latter typically being used as the linear classifier re-trained on the embeddings. Before fitting, a projection is applied to the embeddings. We consider two projections that have the same kernel, but different ranges. Theorem \ref{thm:with_re-training_no_regularisation} shows that, in the case of a strictly convex loss function without regularization, both fitted models lead to the same predictions.

\begin{restatable}[Equivalent predictions after oblique re-training]{theorem}{RetrainNoReg}
\label{thm:with_re-training_no_regularisation}
Consider observations $(\boldx, \boldy) \in \mathbb{R}^d \times \mathcal{Y}$ and a model $f(\boldx; \boldtheta)$ that only depends on the inputs $\boldx$ and parameters $\boldtheta$ through their inner product, i.e., $f(\boldx; \boldtheta) = f(\boldx^{\T}\boldtheta).$ Suppose we have data $\left\{ (\boldx_k, \boldy_k) \right\}_{k=1}^n,$ which is organized in a design matrix $\X \in \mathbb{R}^{n \times d}$ and $\Y \in \mathcal{Y}^n.$ Before fitting the model, we apply an oblique transformation to the features $\boldx.$ We consider two projections that have the same kernel $\calU \in \mathbb{R}^d,$ but different ranges $\calA, \calB \in \mathbb{R}^d.$ We denote the corresponding transformation matrices as $\bigP_\calA, \bigP_\calB,$ and we define $\boldx_\calA = \bigP_{\calA} \boldx \in \calA$ and $\boldx_\calB = \bigP_{\calB} \boldx \in \calB.$ Let $\mathcal{L}(\X \boldtheta, \Y)$ be a loss function with a unique minimizer. Then the following two minimizers
\begin{align} \label{eq:optimization_linear_models}
    \boldtheta^{*}_{\calA} =  \argmin_{\boldtheta \in \calA}\mathcal{L}(\X_{\calA}\boldtheta, \Y), \quad  
    \boldtheta^{*}_{\calB} =  \argmin_{\boldtheta \in \calB}\mathcal{L}(\X_{\calB}\boldtheta, \Y)
\end{align}
lead to the exact same predictions. In other words, $\boldx_{\calA}^{\T}\boldtheta_{\calA}^* = \boldx_{\calB}^{\T} \boldtheta_{\calB}^*$ for any $\boldx \in \mathbb{R}^d.$
\end{restatable}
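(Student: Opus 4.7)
The plan is to reduce both constrained optimization problems to the same unconstrained problem over a common subspace, and then invoke the uniqueness of the minimizer.

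First, I would observe that the prediction on any $\boldx$ is a linear functional determined entirely by a vector in $\mathbb{R}^d$:
\begin{equation*}
\boldx_\calA^\T \boldtheta^{*}_\calA = (\bigP_\calA \boldx)^\T \boldtheta^{*}_\calA = \boldx^\T \bigP_\calA^\T \boldtheta^{*}_\calA,
\end{equation*}
and analogously for $\calB.$ Setting $\boldsymbol{w}_\calA := \bigP_\calA^\T \boldtheta^{*}_\calA$ and $\boldsymbol{w}_\calB := \bigP_\calB^\T \boldtheta^{*}_\calB,$ it therefore suffices to show $\boldsymbol{w}_\calA = \boldsymbol{w}_\calB.$

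Second, I would establish the key linear-algebraic fact that $\bigP_\calA^\T$ restricts to a bijection from $\calA$ onto $\calU^\perp$ (and similarly for $\bigP_\calB^\T$). Since $\bigP_\calA$ is idempotent with range $\calA$ and kernel $\calU,$ its transpose $\bigP_\calA^\T$ is idempotent with range $\calU^\perp$ and kernel $\calA^\perp.$ Injectivity on $\calA$ follows from $\calA \cap \calA^\perp = \{\vzero\},$ and surjectivity from the dimension count $\dim \calA = d - \dim \calU = \dim \calU^\perp.$

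Third, I would exploit the identity $\X_\calA = \X \bigP_\calA^\T,$ so that $\mathcal{L}(\X_\calA \boldtheta, \Y) = \mathcal{L}(\X \bigP_\calA^\T \boldtheta, \Y).$ Reparametrizing via $\boldsymbol{w} = \bigP_\calA^\T \boldtheta,$ which by the previous step ranges bijectively over $\calU^\perp$ as $\boldtheta$ ranges over $\calA,$ the $\calA$-problem becomes
\begin{equation*}
\argmin_{\boldsymbol{w} \in \calU^\perp} \mathcal{L}(\X \boldsymbol{w}, \Y).
\end{equation*}
The same reparametrization applied to the $\calB$-problem yields the very same reduced problem, whose minimizer is unique by hypothesis. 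Hence $\boldsymbol{w}_\calA = \boldsymbol{w}_\calB,$ and combining with the first step gives $\boldx^\T \boldsymbol{w}_\calA = \boldx^\T \boldsymbol{w}_\calB$ for every $\boldx \in \mathbb{R}^d,$ which is the claim.

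The main obstacle is the bijection step; once it is in hand, the rest is a routine change of variables. A mildly subtle point concerns the uniqueness hypothesis: $\mathcal{L}(\cdot, \Y)$ need not have a unique minimizer over all of $\mathbb{R}^n,$ but the natural reading is that the induced parameter problem admits one, which is precisely what the reduction requires (this holds, for example, whenever $\mathcal{L}$ is strictly convex in its first argument and $\X$ is injective on $\calU^\perp$).
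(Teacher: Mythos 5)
Your proof is correct, and it takes a genuinely different (arguably cleaner) route than the paper's. The paper first proves a lemma constructing a linear isomorphism $F:\calA\to\calB$ between the two ranges, $F(\bigP_\calA\boldx)=\bigP_\calB\boldx$, then transfers the fitted parameters via $\boldtheta_\calB:=F^{-\T}\boldtheta_\calA^{*}$, checks that the fitted values coincide ($\X_\calB\boldtheta_\calB=\X_\calA\boldtheta_\calA^{*}$), and finally invokes uniqueness of the minimizer over $\calB$ to identify $\boldtheta_\calB$ with $\boldtheta_\calB^{*}$. You instead push both constrained problems forward into a single canonical one: using $\X_\calA=\X\bigP_\calA^{\T}$ and the fact that $\bigP_\calA^{\T}$ restricted to $\calA$ is a bijection onto $\calU^{\perp}$ (likewise for $\calB$), both fits reduce to $\argmin_{\boldsymbol{w}\in\calU^{\perp}}\mathcal{L}(\X\boldsymbol{w},\Y)$, and uniqueness there gives $\bigP_\calA^{\T}\boldtheta_\calA^{*}=\bigP_\calB^{\T}\boldtheta_\calB^{*}$, which is precisely the prediction functional. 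What your route buys: it exhibits the invariant object explicitly (the effective weight vector in $\calU^{\perp}$, depending only on the shared kernel), it avoids the somewhat delicate bookkeeping with $F^{-\T}$ as a map between subspaces, and it makes the reading of the vague uniqueness hypothesis transparent---since your bijections put minimizers over $\calA$, over $\calB$, and over $\calU^{\perp}$ in one-to-one correspondence, requiring uniqueness of the reduced problem is equivalent to the paper's implicit assumption of a unique minimizer on each constrained set; your parenthetical sufficient condition (strict convexity plus injectivity of $\X$ on $\calU^{\perp}$) is also right. Incidentally, your argument never actually needs idempotency of $\bigP_\calA,\bigP_\calB$: only their kernels, ranges, and rank--nullity enter, so it is marginally more general than the paper's lemma. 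What the paper's construction buys in exchange is an explicit parameter correspondence between the two fitted models. Both proofs rest on the same core fact that projections sharing a kernel differ by an invertible change of coordinates.
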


The proof of Theorem \ref{thm:with_re-training_no_regularisation} is given in Appendix \ref{app:proof_with_re-training_no_regularisation}. The intuition behind this result is that there exists an invertible linear transformation between the data after  $\bigP_{\calA}$ and the data after $\bigP_{\calB}.$ In other words, the choice of the range does not determine how much (linear) information about the target variable is lost in the oblique projection. This is solely determined by the choice of the kernel.

We point out that the assumption of unique minimizers corresponds to strictly convex loss functions, which is a common assumption for linear and logistic regression \citep{Albert1984}. In addition, note that the constraints in \eqref{eq:optimization_linear_models} of the parameters to $\calA, \calB$ is without loss of generality. Because of the inner product, components perpendicular to the subspaces do not affect predictions.

\subsection{When does changing the range matter?}
\label{sec:range_matter}

Theorem \ref{thm:with_re-training_no_regularisation} provides a case where SPLINCE will lead to the same predictions as other concept removal methods that ensure linear guardedness (e.g., SAL and LEACE). Here, we identify two practical cases where applying projections with the same kernel and different ranges will typically lead to different predictions.

\begin{enumerate}
 \item \textbf{When re-training the last layer with regularization}: if we include a regularization term (such as $||\boldtheta||_2$ or $||\boldtheta||_1$) in our loss function $\mathcal{L}$, then it no longer exclusively depends on the parameters via the inner product $\X\boldtheta.$ This will generally lead to $\boldx_{\calA}^{\T}\boldtheta_{\calA}^* \neq \boldx_{\calB}^{\T} \boldtheta_{\calB}^*$ for any two projections with the same kernel and different ranges.
 \item \textbf{When not re-training the last layer}: applying the same parameters to projected embeddings that lie in two different subspaces will typically not lead to the same predictions. If we consider projections $\bigP_\calA$ and $\bigP_\calB$ with ranges $\calA$ and $\calB,$ then the predictions can only be the same if the parameters $\boldtheta^*$ lie in the orthogonal complement of both $\bigP_\calA$ and $\bigP_\calB,$ i.e.,
 \begin{align}
     \boldx^{\T}\bigP_\calA^{\T}\boldtheta^* =   \boldx^{\T}\bigP_\calB^{\T}\boldtheta^* \quad \Leftrightarrow \quad
      \boldx^{\T}( \bigP_\calA - \bigP_\calB)^{\T}\boldtheta^* = 0.
 \end{align}
Since the projections considered in this paper are constructed without knowledge of the parameters, these will typically lie outside the orthogonal complements. Note that this use case is relevant for language modeling, where re-training the parameters of the last layer is typically not feasible in terms of computational resources and/or data availability.
\end{enumerate}

For these cases we expect SPLINCE to outperform other methods that ensure linear guardedness, as it is designed to preserve task-relevant information. We empirically investigate this in the next section.\footnote{See \href{https://github.com/fholstege/SPLINCE}{this link} for our code for the experiments, as well as an implementation of SPLINCE.}

\section{Experiments}
\label{sec:experiments}

This section is structured as follows. We start by investigating classification tasks when the last layer of the model is re-fitted with regularization. Then, in Section~\ref{sec:llms}, we investigate language modeling, when the last layer of the language model (LM) is not re-trained post-projection. Finally, in order to qualitatively assess the effect of the different projections, we apply SPLINCE to black and white image data in Section~\ref{sec:image_experiment}. Across experiments, we compare SPLINCE to two other projections: LEACE and SAL (see Section \ref{sec:related_work}). We focus on these projections since, similar to SPLINCE, they guarantee linear guardedness with regards to a concept, and only differ in choice of range. Furthermore, LEACE and SAL have been shown to outperform other existing concept-removal methods such as INLP and RLACE, which may or may not satisfy linear guardedness.

\subsection{Classification where the last layer is re-trained with regularization}
\label{sec:classification}

We focus on two classification problems. First, we use the \textit{Bias in Bios} dataset on professions and biographies from \citet{Dearteaga_2019}.
We focus on the set of biographies which carry the `professor' label, $\y_{\prof} \in \{0, 1\}$, and seek to remove the concept of whether the subject was male or not, $z_{\gender} \in \{0, 1\}$. 
Second, inspired by \citet{huang-etal-2024-language}, 
we use the \textit{Multilingual Text Detoxification} dataset from \citet{dementieva2024overview}. We focus on three languages (English, German and French), making the concept-label $\z_{\lang} \in \{1, 2, 3\}$ non-binary. 
This dataset consists of texts from users that are classified as toxic or non-toxic, $y_{\tox}\in \{0, 1\}$. 

\textbf{Set-up of the experiment}: we seek to investigate the impact of each projection as the correlation between the task of interest $(\y_{\prof}, \y_{\tox})$ and the concept to remove $(\z_{\gender}, \z_{\lang})$ becomes stronger. We expect that as the relationship becomes stronger, the difference between SPLINCE and other projections becomes greater. The reason is that stronger correlated labels have covariances with the embeddings that are typically more aligned. Removing the concept is then more likely to also remove information about the task of interest.

To alter the relationship between the task of interest and concept, we create smaller versions of the original datasets, where we vary the extent to which $\y_{\prof}, \y_{\tox}$ co-occur with respectively $\z_{\gender}, \y_{\lang}$.  For the \textit{Bias in Bios} dataset, we vary $p(y_{\prof} = a \mid z_{\gender}  = a)$ with $a \in \{0, 1\}$, i.e., the conditional probability that the biography is of a professor and male or not a professor and female. For the  \textit{Multilingual Text Detoxification} dataset we vary $p(y_{\tox} = 1 \mid z_{\lang}  = 1),$ e.g., the conditional probability that a toxic comment appears in the English language. We balance with respect to respectively $\y_{\prof}, \y_{\tox}.$ In order to measure how much of the task-relevant information is retained after the projection, we create a test set where there is no correlation between $\y_{\prof}, \y_{\tox}$ and the respective concepts $\z_{\gender}, \z_{\lang}.$ 
Additional details on the datasets are given in Appendix \ref{app:datasets}.

\textbf{Models and training procedure}: for the \textit{Bias in Bios} dataset, we finetune a BERT model \citep{devlin-etal-2019-bert} to classify the profession. 
For the \textit{Multilingual Text Detoxification} dataset we finetune multilingual E5 (ME5) embeddings  \citet{wang2024multilinguale5textembeddings} to classify the sentiment. For the BERT model, we add a linear layer on top of the embeddings of the [CLS] tokens for classification. For the ME5 embeddings, we add a linear layer on top of the average over all tokens. We apply projections to the last-layer embeddings - the [CLS] token for the BERT model, and the average over all tokens for the ME5 model. Afterwards we re-fit a logistic regression with $l_2$ regularization. We tune the strength of the $l_2$ regularization based on a validation set. This entire procedure (finetuning, projection, re-fitting, $l_2$ penalty selection) is repeated per seed. Additional details are given in Appendix \ref{app:training}.

\textbf{Results}: the results of the experiments for both datasets are given in Figure \ref{fig:classification}. We focus on overall accuracy as well as worst-group accuracy.
Worst-group accuracy is defined as the lowest accuracy for all combinations of the task  and concept. A low worst-group accuracy reflects that a model relies on the correlation between the task and concept in the training data \citep{Sagawa_2020}. For both datasets, as the relationship between task and concept becomes stronger, SPLINCE outperforms the other projections in both accuracy and worst-group accuracy. As the correlation between the task and concept becomes stronger, SAL and LEACE remove a significant part of $\boldSigmaxy,$ contrary to SPLINCE. This is illustrated for the \textit{Bias in Bios} dataset in Figure \ref{fig:removal_cov} in Appendix \ref{app:removal_cov}

\begin{figure}[H]
    \centering
\includegraphics[width=0.68\textwidth]{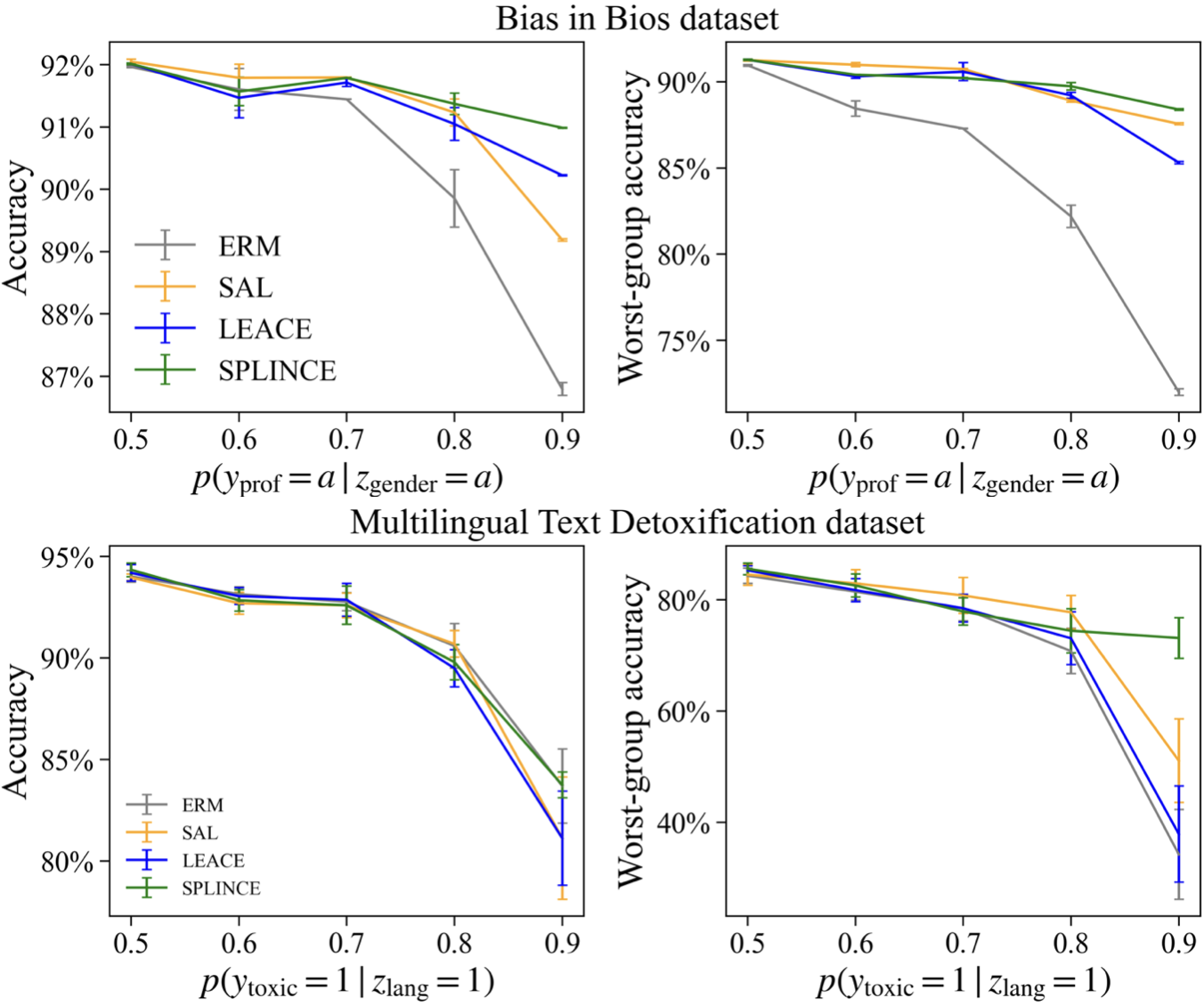}
\vspace{-0.25cm}
    \caption{Performance of different projections on the  \textit{Bias in Bios} and \textit{Multilingual Text Detoxification} dataset. We re-train the last-layer after applying each projection. Points are based on the average over 3 seeds, 5 seeds respectively for the two datasets. The error bars reflect the 95\% confidence interval.} 
    \label{fig:classification}
\end{figure}

\subsection{Language modeling}
\label{sec:llms}

We focus on two language modelling tasks using the Llama series of models \citep{ touvron2023llama2openfoundation, grattafiori2024llama3herdmodels}. First, we use a dataset from \citet{limisiewicz2024debiasingalgorithmmodeladaptation}, which we refer to as the \textit{profession dataset}. 
Inspired by \citet{bolukbasi_2016}, this dataset contains prompt templates containing professions, which need to be finished by the LM (e.g.,`the plumber wanted that'). Each profession has a \textit{stereotype} score $z_{\stereotype}$. This indicates how strongly a profession is connected with the male gender through stereotypical cues (e.g., plumber has a high stereotype score, while nurse has a low one). Each profession also has a \textit{factual} score $y_{\fact}$, which indicates how strong a profession is connected to the male gender through factual information (e.g., waiter has a high factual score, but waitress has a low one). 

Second, we use the \textit{Winobias} dataset from \citet{zhao-etal-2018-gender}. Each prompt contains two professions and pronouns. Prompts are marked pro-stereotypical or anti-stereotypical, denoted  $\z_{\prostereotype} \in \{0, 1\}$. In pro-stereotypical prompts, the coreference links to a profession with the stereotypical gender matching the gender of the pronoun. An example is `The \underline{mechanic} gave the clerk a present because \underline{he} won the lottery. \underline{He} refers to'.  In anti-stereotypical cases, the profession’s stereotypically assumed gender is different from the gender of the pronouns.  The task is to finish the prompt with one of the 40 professions, with the correct profession denoted $\y_{\profession} \in \{1, 2, \dots 40\}$. For additional details on both datasets, see Appendix \ref{app:datasets}.

\textbf{Set-up of the experiments}: for the \textit{profession dataset}, our goal is to create an LM that does not rely on stereotypical cues, but on factual information. We estimate the extent to which the LM $\M$ relies on stereotypical cues or factual information as follows.
Let  $t_{\he}$/$t_{\she}$ be the tokens for "he"/"she". Let $\boldt_i$ denote tokens for a prompt $i$, and
$p_{\M}(t_{\he}| \boldt_i)$ the probability assigned by a model of the "he" token conditional on the prompt $\boldt_i$. We measure the log-odds ratio between the probability of the next token being `he' or `she' as
\begin{align}
\label{eq:logodds}
    \mathrm{odds}_{\mathrm{he/she}, i} = \log\left(\frac{p_{\M}(t_{\he}| \boldt_i)}{ p_{\M}(t_{\she}| \boldt_i)}\right),
\end{align} 
and estimate the linear regression 
\begin{align}
     \mathrm{odds}_{\mathrm{he/she}, i} = z_{\stereotype, i}\betastereo + y_{\fact, i}\betafact + \hat{\alpha}.
\end{align}
 Intuitively, the coefficients indicate to what extent the difference in the probability of assigning "he" or "she" can be explained by stereotypical cues or factual information \citep{limisiewicz2024debiasingalgorithmmodeladaptation}.

For the \textit{Winobias dataset}, we seek to create an LM that is able to provide the correct profession, regardless of whether or not the coreference link is pro-stereotypical. We seek to remove $\z_{\prostereotype}$ while preserving the covariance between the embeddings and $\y_{\profession}$.

\textbf{Results}: for the experiment on the \textit{profession dataset}, the results are shown in Table \ref{tab:profession_main}.
We report the exponent of the coefficients in Equation \ref{eq:logodds}, as this tells us how more likely the `he' token becomes relative to the `she' token after a one-unit increase in either the stereotypical or factual score. After applying any of the three projections, the extent to which the model relies on stereotypical information is greatly reduced, per the reduction in $\exp(\betastereo)$. The extent to which the model relies on factual information after a projection is greatly reduced when applying SAL or LEACE, whereas it is increased or preserved after applying SPLINCE.

\begin{table}[H]
\begin{center}
\caption{Results of applying different projections to the last layer of various Llama models for the \textit{profession dataset}.}
\small
\label{tab:profession_main}
\begin{tabular}{clll}
\toprule
Model                       & \multicolumn{1}{c}{Projection} & $\exp(\betastereo)$ & $\exp(\betafact)$  \\
\hline
  \noalign{\vskip 1pt} 
\multirow{4}{*}{Llama   2 7B} & Original & 3,59 & 15,71 \\
                              & +SAL      & 0,80 & 5,90  \\
                              & +LEACE    & 0,85  & 12,14 \\
                              & +SPLINCE     & 0,79 & 24,27$^*$   \\
                              \hline
\multirow{4}{*}{Llama   2 13B}  & Original & 3,84 & 20,2  \\
                                & +SAL   &    0,84 & 4,81  \\
                                & +LEACE    &   0,88 & 16,32 \\
                                & +SPLINCE   &    0,81 & 33,24$^*$ \\
                               \hline
\multirow{4}{*}{Llama   3 8B}   & Original   &    3,98 & 19,02 \\
                                & +SAL    &    0,87 & 3,50   \\
                                & +LEACE  &     0,88 & 7,68  \\
                                & +SPLINCE  &      0,82 & 13,43$^*$	\\
                            \bottomrule
\end{tabular}
\end{center}
\small Note: the $^*$ indicates that difference between the factual coefficient of our projection and the factual coefficient of LEACE is statistically significant at the 1\% level according to a one-tailed $t-$test. The exponent of the coefficients estimates how the odds ratio changes with a one-unit change in $\z_{\stereotype}$ and $\y_{\fact},$ respectively.
\end{table}

For the experiment on the \textit{Winobias dataset}, the results are shown in Figure \ref{fig:winobias_main}. For two out of three Llama models, SPLINCE improves coreference accuracy more than the other projections. In particular, it strongly increases the accuracy for anti-stereotypical prompts. In Appendix \ref{app:llms} report results for additional LM's outside of the Llama series.

\begin{figure}[H]
    \centering
    \includegraphics[width=1.0\textwidth]{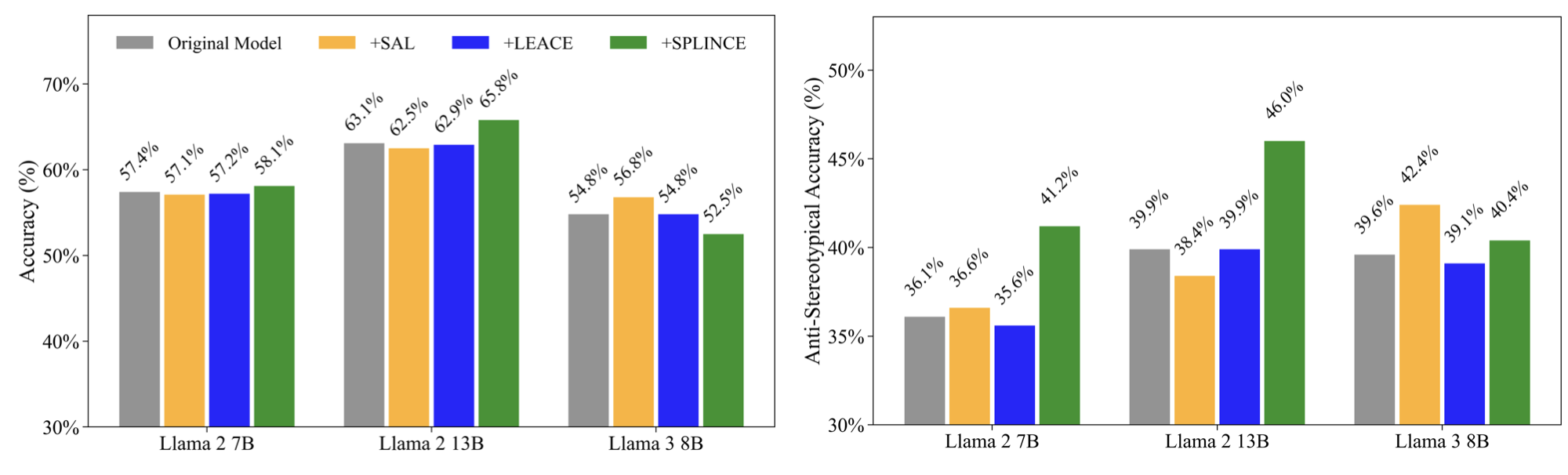}
    \vspace{-0.75cm}
    \caption{Results of applying different projections to the last layer of various Llama models for the \textit{Winobias} dataset. The left plot shows the accuracy on a test set consisting of half pro-stereotypical and half anti-stereotypical prompts. The right plot shows the accuracy on the anti-stereotypical prompts in this test set.  }
    \label{fig:winobias_main}
\end{figure}

\subsection{Application to image data}
\label{sec:image_experiment}

We conduct an experiment for the \textit{CelebA} dataset \citep{liu2015faceattributes} that is similar to one from \citet{rafvogel_2022, pmlr-v206-kleindessner23a, pmlr-v235-holstege24a}. The goal of the experiment is to qualitatively show what features are removed by each projection. The  concept to remove is whether or not someone is smiling, denoted $\z_{\smiling} \in \{0, 1\}$, and we seek to preserve whether or not someone wears glasses, $\y_{\glasses} \in \{0, 1\}$. We subsample 10,000 images from the original \textit{CelebA} dataset such that $p(\y_{\glasses} = a \mid \z_{\smiling} = a) = 0.9$.

In Figure \ref{fig:CelebA_individual} we illustrate the effect of each projection on the raw pixels, for several images. SPLINCE accentuates parts of the image that are useful for distinguishing images with and without glasses. For instance, it tends to make the areas around the eyes lighter when someone does not wear glasses, and darken when they do. This shows that SPLINCE mitigates the damage to the ``glasses'' features exposed to a linear classifier, despite of the high correlation. We illustrate that this holds on average across the whole dataset in Appendix \ref{app:celebA}.

In Appendix \ref{app:vision} we include additional results CelebA dataset as introduced here, as well as the Waterbirds dataset \citet{Sagawa_2020}. SPLINCE performs relatively worse for these vision classification tasks than the NLP classification tasks in \ref{sec:classification}.  This gap in performance between the vision and NLP classification tasks is an interesting direction for future research.

\begin{figure}[H]
    \centering
    \vspace{-0.1cm}
    \includegraphics[width=0.9\textwidth]{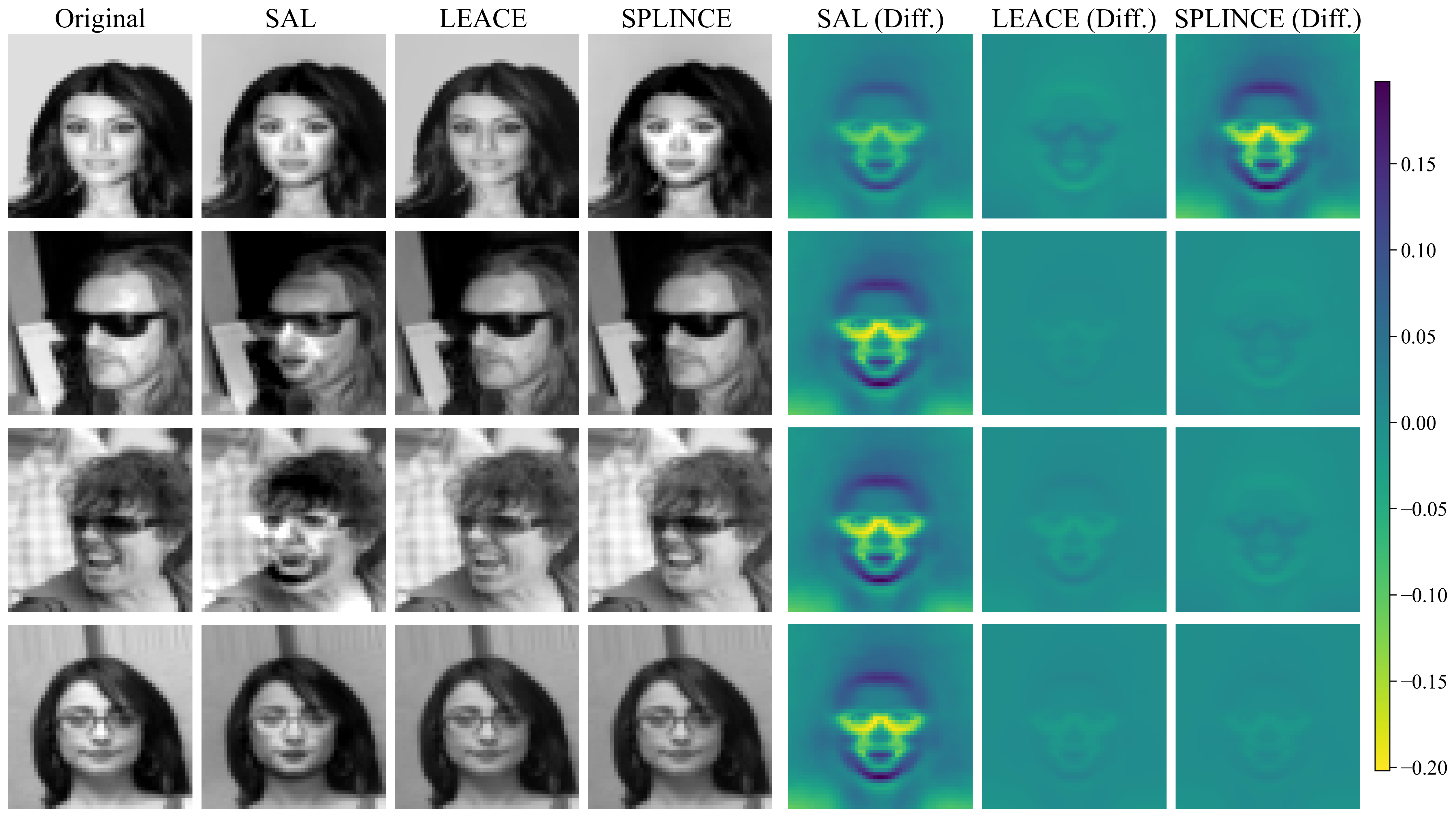}
    \vspace{-0.1cm}
    \caption{Application of different projections to raw pixel data of CelebA.  The first columns shows the original image. The next four columns show the image, after the respective projection. The final three columns indicate the difference between the original image and the image after the projection. }
    \label{fig:CelebA_individual}
\end{figure}

\section{Discussion and Limitations} \label{sec:discussion_and_limitations}

Theoretically, we show that the range of the projection---particularly, whether or not it includes $\boldSigmaxy$---can matter only if one does not re-fit the last linear layer, or refit it with a regularization term. However, we lack a theoretical understanding that would explain under which conditions it \emph{does} matter, and when it is expected to outperform LEACE. In Appendix \ref{app:excess_risk} we provide an initial investigation into this question, showing SPLINCE is guaranteed to outperform LEACE when we freeze the last layer and the embeddings are whitened. Future work should study whether preserving main-task covariance is optimal under more general settings. In this paper, we focus on preserving that quantity as it is intuitively related to main-task performance; however, we note that the SPLINCE objective can be modified to preserve \emph{any} direction that is not identical to the covariance with the protected attribute. 

In addition, a limitation of the SPLINCE objective is that it prioritizes preserving $\boldSigmaxy$ over minimizing  $\E\left[ \big\|  \bigP \boldx  - \boldx  \big\|^2_{\bigM} \right]$. This can potentially lead to distortive changes to the embeddings. We investigate this limitation in Appendix \ref{app:multiple_layers}. As the SPLINCE objective sometimes fails when intervening in middle layers, we aim to study the adaption of the SPLINCE objective for preserving the directions in the representation space that are being used by an LM in some middle layer.  

While we did not investigate a multi-modal setting (e.g. CLIP, \citet{pmlr-v139-radford21a}), one potential limitation of SPLINCE is that covariance subspaces might not be aligned across modalities.

Finally, see Appendix \ref{app:impact} for a discussion on ethical considerations when applying SPLINCE.

\section{Conclusion}
We introduce SPLINCE, a method that generalizes previous concept-erasure methods by provably removing the ability to linearly predict sensitive information, while maintaining the covariance between the representations and \emph{another} main-task label. Our analysis pins the problem down to a pair of geometric constraints—placing $\colsp(\boldSigmaxz)$ in the kernel of the projection while forcing $\colsp(\boldSigmaxy)$ to lie in its range—and proves that
the oblique projector of Theorem \ref{thm:SPLINCE} is the unique
minimum-distortion solution under these constraints. Experimentally, SPLINCE tends to better preserve average and worst-group accuracy on the \textit{Bias in Bios} and \textit{Multilingual Text Detoxification} tasks when the task–concept correlation is high. In a language-modeling setting, we are able to influence stereotypical bias while preserving factual gender information; and in most models, it is better in preserving LM's ability to perform co-reference after debiasing in the \textit{Winobias} dataset.  Future work should formalize whether maintaining the covariance with the main-task translates into main-task loss guarantees and develop variants over the SPLINCE objective that impose weaker distortion when applied to earlier hidden layer,  or performs better for vision datasets.

\bibliographystyle{plainnat}
\bibliography{bibfile}

\newpage
\appendix

\section{Proofs of theorems} \label{app:proofs}

\subsection{Proof of theorem \ref{thm:SPLINCE}} \label{app:proof_SPLINCE}

We will prove Theorem \ref{thm:SPLINCE} by making use of a basis tailored to the problem. In the end we will transform the resulting formula back to the basis-independent formulation of \eqref{eq:solution_SPLINCE}.

Let $m = \rank(\boldSigmaxx),$ with $1 \leq m \leq d.$ Let $\calW = \colsp(\bigW)$ be the subspace in $\mathbb{R}^d$ in which $\boldx$ has non-zero variance. Without loss of generality, we will define a basis for $\boldx$ where the first $m$ coordinates of $\boldx$ lie in $\calW$. The last $l = d-m$ coordinates lie in its orthogonal complement $\calW^{\perp}$. Our $\boldx$ can now be written as
\begin{align}
\label{eq:basis_change}
\boldx = \begin{pmatrix}
        \boldxtilde\\
        \boldxcheck
    \end{pmatrix}, \qquad  \boldxtilde \in \mathbb{R}^m, \boldxcheck \in   \mathbb{R}^{l}.
\end{align}
We will use $\xtilde_i \in \mathbb{R}$ to denote elements from the first $m$ coordinates, and $\xcheck_i \in \mathbb{R}$ to denote elements from the final $l$ coordinates.\\

We also assume that this basis is orthonormal with respect to the inner product $\bigM$, such that:
$\boldx^{\T}\bigM\boldx = \sum^d_{i=1}\alpha_i x_i ^2$ for fixed $\alpha_1, ..., \alpha_d > 0.$ Creating such a basis is always possible by standard orthogonalization procedures, now restricted to the coordinates of the respective subspaces $\calW$ and $\calW^\perp.$ As a consequence of this, the optimization problem defined in Theorem \ref{thm:SPLINCE} can be decomposed in $d$ independent optimization problems, one for each term in the sum that corresponds to the norm (squared), i.e., one for each coordinate of $\boldx.$  To be more concrete, the optimization problem becomes for $i \in \{1, ..., d\}$
\begin{align} \label{eq:separate_optimization}
     \argmin_{\bigP \in \mathbb{R}^{d\times d}} \E\left[( (\bigP\boldx)_i - x_i )^2 \right]\quad  &\mathrm{subject\:to}\:\: \mathrm{Cov}((\bigP\boldx)_i, \boldz) = \bigzero,
      \nonumber \\
      &\mathrm{subject\:to}\:\: \mathrm{Cov}((\bigP\boldx)_i, \boldy) = \mathrm{Cov}(x_i, \boldy),
\end{align}
where $x_i \in \mathbb{R}$ denotes the $i^{th}$ component of $\boldx$ and $(\bigP\boldx)_i \in \mathbb{R}$ the $i^{th}$ component of $\bigP\boldx.$ The weights $\alpha_1, ..., \alpha_d$ are left out, as they become irrelevant if we manage to find the minimum for each $x_i$.

\begin{lemma} \label{lemma:SPLINCE_decomposition}
    Let
    \begin{equation} \label{eq:relation_P_Ptilde}
        \bigP= \begin{pmatrix}
        \bigPtilde & \bigzero_{m, l}\\
        \bigzero_{l, m} & \bigzero_{l, l}
    \end{pmatrix},
    \end{equation}
    where $\bigPtilde \in \mathbb{R}^{m \times m}$ and where $\bigzero_{m, l}$ is an $(m \times l)$-matrix of zeros and the other zero-matrices are defined similarly. A solution to the optimization problem, for $i \in \{1, ..., m\},$
    \begin{align} \label{eq:separate_optimization_Ptilde}
     \argmin_{\bigPtilde \in \mathbb{R}^{d\times d}} \E\left[( (\bigPtilde\boldxtilde)_i - \xtilde_i )^2 \right]\quad  &\mathrm{subject\:to}\:\: \mathrm{Cov}((\bigPtilde\boldxtilde)_i, \boldz) = \bigzero,
      \nonumber \\
      &\mathrm{subject\:to}\:\: \mathrm{Cov}((\bigPtilde\boldxtilde)_i, \boldy) = \mathrm{Cov}(\xtilde_i, \boldy),
    \end{align}
    corresponds then via \eqref{eq:relation_P_Ptilde} to a solution of the original optimization problem of Theorem \ref{thm:SPLINCE}.
\end{lemma}

\begin{proof}
We start by dividing $\bigP$ in four block matrices,
\begin{align}
\label{eq:P_basis_change}
    \bigP = \begin{pmatrix}
        \bigPtilde & \bigB\\
        \bigC & \bigD
    \end{pmatrix}, 
\end{align}
where $\littlespace \bigPtilde \in \mathbb{R}^{m \times m},  \bigB \in \mathbb{R}^{m \times l}, \bigC \in \mathbb{R}^{l \times m}, \bigD \in \mathbb{R}^{l \times l}$. We now proceed to determine these matrices, starting with a solution for $\bigC$ and $\bigD.$ We do this by solving the optimization problem defined in \eqref{eq:separate_optimization} for the final $l$ rows. For $i \in \{l, ..., d\}$, we can write
\begin{align}
    (\bigP\boldx)_i = \sum^m_{j=1}\bigCsc_{p, j}\xtilde_j + \sum^d_{j=m+1}\bigDsc_{p, j}\xcheck_j,
\end{align}
where $p = i - l + 1.$ We use the indexation via $p$ because we use the rows $p \in \{1, ..., l\}$ of the matrices $\bigC$ and $\bigD$ respectively to represent  $(\bigP\boldx)_i$. The objective in \eqref{eq:separate_optimization} for $i \in \{l, ..., d\}$ and corresponding $p$ can be written as
\begin{align}
    \E\left[((\bigP\boldx)_i - \xcheck_i)^2\right] 
    &= \E \left[\left( \sum^m_{j=1}\bigCsc_{p, j}\xtilde_j + \sum^d_{j=m+1}\bigDsc_{p, j}\xcheck_j - \xcheck_i \right)^2 \right] \\
    &= \E \left[\sum^m_{j=1}\bigCsc_{p, j}\xtilde_j \right]^2 \\
    &= \Big( \bigC \Cov( \boldxtilde, \boldxtilde) \bigC^T  \Big)_{pp},\label{eq:objective_d_m}
\end{align}
where in the second equality we used that $\xcheck_i = 0$ almost surely and in the final equality we used that $\E [\xtilde_i]=0.$ We note that the values for $\bigDsc_{p, j}$ do not matter for the objective. Furthere, because $\Cov( \boldxtilde, \boldxtilde)$ is p.s.d., we can achieve the minimum of  \eqref{eq:objective_d_m} by setting $\bigC = \bigzero.$ For simplicity, we also set $\bigD = \bigzero.$ This then also trivially satisfies the kernel and range constraints for the components $i \in \{l, ..., d\}.$ \\

For $i \in \{1, ..., m\}$, we can write
\begin{align}
    (\bigP\boldx)_i = \sum^m_{j=1}\bigAsc_{i, j}\xtilde_j + \sum^{d}_{j=m+1}\bigBsc_{i, j}\xcheck_j.
\end{align}
We can set $\bigB = \bigzero$ for the same reason as we chose $\bigD = \bigzero.$ The objective and constraints for $\bigPtilde$ are then as in \eqref{eq:separate_optimization_Ptilde}. This concludes the proof of Lemma \ref{lemma:SPLINCE_decomposition}.
\end{proof}

In order to simplify the remaining objective for $\bigPtilde$ in Lemma \ref{lemma:SPLINCE_decomposition}, we write $\bigPtilde = \bigAtilde \bigWtilde,$ where $\bigWtilde = \boldSigma_{\boldxtilde, \boldxtilde} \in \mathbb{R}^{m \times m}$ is full-rank, symmetric and p.s.d., and thus invertible. Because of this, optimizing for $\bigPtilde$ is equivalent to optimizing for $\bigAtilde.$ Note that in this notation,
\begin{align}
    \boldSigmaxx = \begin{pmatrix}
        \boldSigma_{\boldxtilde, \boldxtilde} & \bigzero_{m, l}\\
        \bigzero_{l, m} & \bigzero_{l, l}
    \end{pmatrix}, 
\end{align}
an we can write the whitening matrix $\bigW$, and its Moore-Penrose inverse as
\begin{align}
    \bigW = \begin{pmatrix}
        \bigWtilde &   \bigzero_{m, l}\\
        \bigzero_{l, m} & \bigzero_{l, l}
    \end{pmatrix}, \quad  \bigW^{+} = \begin{pmatrix}
        \bigWtilde^{-1} & \bigzero_{m, l}\\
        \bigzero_{l, m} & \bigzero_{l, l}
    \end{pmatrix}.
\end{align}
Using that we can write $\xtilde_i$ as
\begin{align}
   \xtilde_i = (\bigWtilde^{-1}\bigWtilde\boldxtilde)_i =  \sum^m_{j=1}\bigWtildesc_{i, j}^{-1}(\bigWtilde\boldxtilde)_j),
\end{align}
the remaining objective becomes, for $i \in \{1, ..., m\},$
\begin{align}
    \E\left[((\bigP\boldx)_i - \xtilde_i)^2\right]
    & = \E\left[((\bigPtilde\boldxtilde)_i - \xtilde_i)^2\right] \nonumber \\
    & = \E\left[\left(\sum^m_{j=1}(\bigAtildesc_{i,j}-\bigWtildesc^{-1}_{i,j})(\bigWtilde\boldxtilde)_j\right)^2\right] \nonumber\\
    & = \Var\left(\sum^m_{j=1}(\bigAtildesc_{i,j}-\bigWtildesc^{-1}_{i,j})(\bigWtilde\boldxtilde)_j\right) + \E\left[\sum^m_{j=1}(\bigAtildesc_{i,j}-\bigWtildesc^{-1}_{i,j})(\bigWtilde\boldxtilde)_j\right]^2 \nonumber\\
    & = \Var\left(\sum^m_{j=1}(\bigAtildesc_{i,j}-\bigWtildesc^{-1}_{i,j})(\bigWtilde\boldxtilde)_j\right) \nonumber\\
    & = \sum^m_{h=1}\sum^m_{k=1} (\bigAtildesc_{i,h}-\bigWtildesc^{-1}_{i, h})(\bigAtildesc_{i,k}-\bigWtildesc^{-1}_{i, k})\Cov((\bigWtilde\boldxtilde)_h, (\bigWtilde\boldxtilde)_k) \nonumber \\
    & = \left[ \left(\bigAtilde - \bigWtilde^{-1}\right) \left(\bigAtilde - \bigWtilde^{-1}\right)^{\T} \right]_{i,i} , \label{eq:objective_A}
\end{align}
where we used that $\E[(\bigWtilde\boldxtilde)_j] = 0$ and $\Cov(\bigWtilde\boldxtilde, \bigWtilde\boldxtilde) = I_m$ by definition of the whitening matrix.

The constraints on $\bigPtilde$ in the optimization problem in \eqref{eq:separate_optimization_Ptilde} translate into constraints on $\bigAtilde$ and the optimization problem can be recast as
\begin{subequations} \label{eq:optimization_problem_reformulation1}
\begin{equation}
    \bigAtilde^* = \argmin_{\bigAtilde \in \mathcal{C}_1 \cap \mathcal{C}_2} \left\{ \sum^m_{j=1}\left(\bigAtilde - \bigWtilde^{-1}\right)^2_{i, j} \right\}_{i=1}^m,  
\end{equation}
where 
\begin{align}
    \mathcal{C}_1 & = \left\{ M \in \mathbb{R}^{m \times m} \mid \bigWtilde M \boldSigmaWxztilde = \bigzero_m \right\}, \\
    \mathcal{C}_2 & = \left\{ M \in \mathbb{R}^{m \times m} \mid \bigWtilde M \boldSigmaWxytilde = \boldSigmaWxytilde \right\} .
\end{align}
\end{subequations}
With more than one objective and two constraints, this is a constrained multiple optimization problem. We have seen before that it can be decomposed in $m$ separate constrained optimization problems. Each of these problems has a convex objective function and linear constraints, making the optimum $\bigAtilde^*$ uniquely defined.

The constraints $\mathcal{C}_1$ and $\mathcal{C}_2$ can be interpreted as follows: $\bigAtilde$ must be such that the columns of $\boldSigmaWxztilde$ are in the kernel of $\bigWtilde\bigAtilde$ and that the columns of $\boldSigmaWxytilde$ are eigenvectors of $\bigWtilde\bigAtilde.$ This can be achieved by means of an oblique projection. If we define the following linear subspaces,
\begin{align}
    \tilde{\calU}^\perp & = \colsp \! \left( \boldSigmaWxztilde \right) , \\
    \tilde{\calU}^- & = \tilde{\calU} \cap \left( \colsp \! \left( \boldSigmaWxztilde \right) + \colsp \! \left( \boldSigmaWxytilde \right)  \right)^\perp , \\
    \tilde{\calV} & = \colsp \! \left( \boldSigmaWxytilde \right) + \tilde{\calU}^-,
\end{align}
and we define $\bigUtilde$ and $\bigVtilde$ as the matrices whose columns are an orthonormal basis of $\tilde{\calU}$ and $\tilde{\calV},$ respectively, then
\begin{equation}
    \bigPtilde_\mathrm{obl} = \bigVtilde \left( \bigUtilde^T \bigVtilde \right)^{-1} \bigUtilde^{\T}
\end{equation}
is the transformation matrix of an oblique projection whose kernel is formed by the columns of $\boldSigmaWxztilde$ and whose range include the columns of $\boldSigmaWxytilde.$ The latter means that the columns of $\boldSigmaWxytilde$ are eigenvectors of $\bigPtilde_\mathrm{obl}.$

We claim that any $\bigAtilde \in \mathcal{C}_1 \cap \mathcal{C}_2$ can be written as $\bigBtilde \bigPtilde_\mathrm{obl},$ where $\bigBtilde$ obeys the second constraint, i.e., $\bigBtilde \in \mathcal{C}_2.$ This identification is not unique, as multiple $\bigBtilde$ lead to the same $\bigAtilde.$ We formalize this claim in the following lemma.

\begin{lemma} \label{lemma:set_equivalence}
    Let us define
    \begin{equation}
        \mathcal{B}_{\bigPtilde_\mathrm{obl}} :=  \left\{ \bigBtilde \bigPtilde_\mathrm{obl} \mid \bigBtilde \in \mathcal{C}_2 \right\}.
    \end{equation}
    Then $\mathcal{B}_{\bigPtilde_\mathrm{obl}} = \mathcal{C}_1 \cap \mathcal{C}_2.$
\end{lemma}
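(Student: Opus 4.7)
The plan is to prove the two set inclusions separately, leveraging three elementary observations: (a) by construction $\bigPtilde_\mathrm{obl}$ is a projection with kernel $\tilde{\calU}^\perp = \colsp(\boldSigmaWxztilde)$ and range $\tilde{\calV} \supseteq \colsp(\boldSigmaWxytilde)$, so $\bigPtilde_\mathrm{obl}\,\boldSigmaWxztilde = \mathbf{0}$ and $\bigPtilde_\mathrm{obl}\,\boldSigmaWxytilde = \boldSigmaWxytilde$ (a projection acts as the identity on its range); (b) since $\bigWtilde$ is invertible, the constraint $\mathcal{C}_1$ is equivalent to $\tilde{\calU}^\perp \subseteq \ker M$; and (c) the theorem's assumption $\tilde{\calU}^\perp \cap \colsp(\boldSigmaWxytilde) = \{\mathbf{0}\}$ forces the direct-sum decomposition $\mathbb{R}^m = \tilde{\calU}^\perp \oplus \tilde{\calV}$.

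For the forward inclusion $\mathcal{B}_{\bigPtilde_\mathrm{obl}} \subseteq \mathcal{C}_1 \cap \mathcal{C}_2$, I fix $\bigBtilde \in \mathcal{C}_2$ and set $M = \bigBtilde\,\bigPtilde_\mathrm{obl}$. Then (a) gives $\bigWtilde M \boldSigmaWxztilde = \bigWtilde\bigBtilde\cdot\mathbf{0} = \mathbf{0}$ and $\bigWtilde M\boldSigmaWxytilde = \bigWtilde\bigBtilde\,\boldSigmaWxytilde = \boldSigmaWxytilde$ (using $\bigBtilde \in \mathcal{C}_2$), so $M \in \mathcal{C}_1 \cap \mathcal{C}_2$.

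For the reverse inclusion, I take $M \in \mathcal{C}_1 \cap \mathcal{C}_2$ and simply choose $\bigBtilde = M$, which already lies in $\mathcal{C}_2$ by hypothesis; it therefore suffices to verify the identity $M\,\bigPtilde_\mathrm{obl} = M$. Using (c), decompose any $x \in \mathbb{R}^m$ uniquely as $x = x_k + x_v$ with $x_k \in \tilde{\calU}^\perp$ and $x_v \in \tilde{\calV}$, so $\bigPtilde_\mathrm{obl} x = x_v$. By (b), $M x_k = \mathbf{0}$, whence $M\,\bigPtilde_\mathrm{obl} x = M x_v = M(x_k + x_v) = M x$. Hence $M = M\,\bigPtilde_\mathrm{obl} \in \mathcal{B}_{\bigPtilde_\mathrm{obl}}$.

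The only step that is not pure substitution is establishing (c), which I expect to be the sole obstacle. Trivial intersection follows by combining the theorem's hypothesis with the orthogonality in the definition of $\tilde{\calU}^-$: any $w = v + u^- \in \tilde{\calU}^\perp \cap \tilde{\calV}$ satisfies simultaneously $\langle w, u^-\rangle = 0$ (because $w \in \tilde{\calU}^\perp$ and $u^- \in \tilde{\calU}$) and $\langle w, u^-\rangle = \|u^-\|^2$ (because $v \perp u^-$ by construction of $\tilde{\calU}^-$), forcing $u^- = \mathbf{0}$ and then $v \in \tilde{\calU}^\perp \cap \colsp(\boldSigmaWxytilde) = \{\mathbf{0}\}$; a short dimension count using $\bigl(\colsp(\boldSigmaWxztilde) + \colsp(\boldSigmaWxytilde)\bigr)^\perp \subseteq \tilde{\calU}$ then yields $\dim \tilde{\calV} = m - \dim \tilde{\calU}^\perp$, closing the argument.
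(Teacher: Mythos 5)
Your proof is correct, and for the nontrivial inclusion it takes a genuinely different (and more economical) route than the paper. The paper proves $\mathcal{C}_1 \cap \mathcal{C}_2 \subseteq \mathcal{B}_{\bigPtilde_\mathrm{obl}}$ by building an auxiliary matrix $\bigBtilde$ defined through its action on a basis (sending the columns of $\boldSigmaWxztilde$ to $\bigzero$, the columns of $\boldSigmaWxytilde$ to $\bigWtilde^{-1}\boldSigmaWxytilde$, and the leftover orthonormal vectors $w_j$ to $\bigM w_j$), and then checks that $\bigM = \bigBtilde\,\bigPtilde_\mathrm{obl}$ with $\bigBtilde \in \mathcal{C}_2$; you instead observe that $\bigM$ can serve as its own witness, since $\bigM \in \mathcal{C}_2$ by hypothesis and the kernel constraint alone forces the identity $\bigM\,\bigPtilde_\mathrm{obl} = \bigM$ once one has the decomposition $\mathbb{R}^m = \tilde{\calU}^\perp \oplus \tilde{\calV}$. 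Your verification of that direct sum (trivial intersection via the orthogonality built into $\tilde{\calU}^-$ plus the theorem's assumption, and the dimension count from $\tilde{\calU}^- = \bigl(\colsp(\boldSigmaWxztilde)+\colsp(\boldSigmaWxytilde)\bigr)^\perp$) is correct and actually makes explicit a fact the paper uses only implicitly, namely that its listed collection of vectors forms a basis and that $\bigUtilde^{\T}\bigVtilde$ is invertible so $\bigPtilde_\mathrm{obl}$ is well defined. What the paper's construction buys in exchange is a concrete illustration of the non-uniqueness of the representation $\bigM = \bigBtilde\,\bigPtilde_\mathrm{obl}$ (a point it remarks on just before the lemma), whereas your argument is shorter and isolates the clean structural fact that every $\bigM \in \mathcal{C}_1$ satisfies $\bigM\,\bigPtilde_\mathrm{obl} = \bigM$.
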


\begin{proof}
    It is obvious that $\mathcal{B}_{\bigPtilde_\mathrm{obl}} \subseteq \mathcal{C}_1 \cap \mathcal{C}_2,$ so we focus on proving that $\mathcal{C}_1 \cap \mathcal{C}_2 \subseteq \mathcal{B}_{\bigPtilde_\mathrm{obl}}.$ For this, take an arbitrary $\bigM \in \mathcal{C}_1 \cap \mathcal{C}_2.$ Let
    \[
    \left\{ \colsp (\boldSigmaWxztilde), \colsp (\boldSigmaWxytilde), w_1, w_2, \ldots, w_k \right\}
    \]
    be a basis of $\mathbb{R}^m,$ where the $w_j \in \mathbb{R}^m$ are mutually orthonormal and orthogonal to $\boldSigmaWxztilde$ and $\boldSigmaWxytilde.$ We then define a matrix $\bigBtilde \in \mathbb{R}^{m \times m}$ in terms of its action on this basis, i.e.,
    \[
    \left\{
    \begin{matrix}
        \bigBtilde \boldSigmaWxztilde = \bigzero, \\
        \bigBtilde \boldSigmaWxytilde = \bigWtilde^{-1} \boldSigmaWxytilde , \\
        \bigBtilde w_j = \bigM w_j , \qquad j=1,2,\ldots, k.
    \end{matrix}
    \right.
    \]
    This implies that $\bigM = \bigBtilde \bigPtilde_\mathrm{obl}$ and that $\bigBtilde \in \mathcal{C}_2.$ This concludes the proof of the lemma.
\end{proof}

Lemma \ref{lemma:set_equivalence} enables us to reformulate the optimization problem of \eqref{eq:optimization_problem_reformulation1} as follows. Define the set
\begin{equation} \label{eq:optimization_problem_reformulation2}
    \mathcal{B}^* = \argmin_{\bigBtilde \in \mathcal{C}_2} \left\{ \left[ \left(\bigBtilde \bigPtilde_\mathrm{obl} - \bigWtilde^{-1}\right) \left(\bigBtilde \bigPtilde_\mathrm{obl} - \bigWtilde^{-1}\right)^{\T} \right]_{i,i} \right\}_{i=1}^m .
\end{equation}
This is a set of solutions to a different constrained optimization problem than \eqref{eq:optimization_problem_reformulation1}. All solutions are equivalent in the sense that for any $\bigBtilde^*_1, \bigBtilde^*_2 \in \mathcal{B}^*$ we have that $\bigBtilde^*_1 \bigPtilde_\mathrm{obl} =  \bigBtilde^*_2 \bigPtilde_\mathrm{obl}.$ Seen as an optimization for $\bigBtilde \bigPtilde_\mathrm{obl},$ the objective is convex and the constraints are linear, making the optimum unique. Hence, $\bigAtilde^* = \bigBtilde^* \bigPtilde_\mathrm{obl}$ for any $\bigBtilde^* \in \mathcal{B}^*.$ \\

Now, we claim that $\bigWtilde^{-1} \in \mathcal{B}^*.$ The argument is that $\bigWtilde^{-1}$ is a solution to the unconstrained equivalent of the optimization problem of \eqref{eq:optimization_problem_reformulation2} and, conveniently, also obeys the constraint $\mathcal{C}_2.$ To see this, define 
\begin{align}
    \calL_i =  \left[ \left(\bigBtilde \bigPtilde_\mathrm{obl} - \bigWtilde^{-1}\right) \left(\bigBtilde \bigPtilde_\mathrm{obl} - \bigWtilde^{-1}\right)^{\T} \right]_{i,i},
\end{align}
for $i \in \{1, ..., m\},$ and take the derivative of the loss function with respect to elements of $\bigBtilde,$
\begin{align}
    \frac{\partial \calL_i}{\partial \bigBtilde_{i,k}} =  2\left(\left(\bigBtilde - \bigWtilde^{-1}\right)\bigPtilde_\mathrm{obl} \right)_{i,k} ,
\end{align}
where we used that $\bigPtilde_\mathrm{obl}$ is idempotent. One solution to these first order conditions is $\bigBtilde = \bigWtilde^{-1}.$ It is also obvious that $\bigWtilde^{-1} \in \mathcal{C}_2.$ Tracing the proof backwards, we conclude that
\begin{equation}
    \bigP^* = 
    \begin{pmatrix}
        \bigWtilde^{-1} \bigPtilde_\mathrm{obl} \bigWtilde & \bigzero_{m, l}\\
        \bigzero_{l, m} & \bigzero_{l, l}
    \end{pmatrix},
\end{equation}
solves the original constrained optimization problem of Theorem \ref{thm:SPLINCE}.

This expression is specific for the basis we chose at the beginning of the proof. If we let $\bigS$ be the matrix whose columns are the (orthonormal) vectors of the basis and we define
\begin{align}
    \bigV = \bigS
    \begin{pmatrix}
        \bigVtilde\\
        \bigzero
    \end{pmatrix}, \quad 
    \bigU = \bigS
    \begin{pmatrix}
        \bigUtilde\\
        \bigzero
    \end{pmatrix},
\end{align}
then we can write this result in a basis-independent way as
\begin{equation}
    \bigP^*
    = 
    \bigW^+
    \begin{pmatrix}
        \bigPtilde_\mathrm{obl} & \bigzero_{m, l}\\
        \bigzero_{l, m} & \bigzero_{l, l}
    \end{pmatrix} \bigW
    =
    \bigW^+ \bigP_\mathrm{obl} \bigW, \qquad \mathrm{with} \quad \bigP_\mathrm{obl} = \bigV \left( \bigU^T \bigV \right)^{-1} \bigU^{\T}.
\end{equation}
This corresponds to $\bigP^{\star}_\mathrm{SPLINCE}$ in \eqref{eq:solution_SPLINCE} and concludes the proof of Theorem \ref{thm:SPLINCE}.

\subsection{Proof of theorem \ref{thm:with_re-training_no_regularisation}} \label{app:proof_with_re-training_no_regularisation}

In order to prove theorem \ref{thm:with_re-training_no_regularisation}, we first prove the following Lemma.

\begin{lemma}\label{lem:isomorphism_common_kernel}
Let $\bigP_{\calA},\bigP_{\calB}\in\R^{d\times d}$ be (not necessarily
orthogonal) projection matrices
$\bigP_{\calA}^{2}=\bigP_{\calA}$ and $\bigP_{\calB}^{2}=\bigP_{\calB}$
with the \emph{same kernel}
\(
  \operatorname{Ker}(\bigP_{\calA})
  =\operatorname{Ker}(\bigP_{\calB})
  =\calU.
\)
Set
\(
  \calA:=\operatorname{Range}(\bigP_{\calA}),
  \,
  \calB:=\operatorname{Range}(\bigP_{\calB}).
\)

Define
\[
  F:\calA \longrightarrow \calB,
  \qquad
  F\!\bigl(\bigP_{\calA}\boldx\bigr)\;:=\;\bigP_{\calB}\boldx,
  \quad \forall\,\boldx\in\R^{d}.
\]
Then $F$ is a \emph{linear isomorphism}: it is well defined, linear,
bijective, hence invertible.
\end{lemma}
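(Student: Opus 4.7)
The plan is to verify the four defining properties of $F$ in sequence---well-definedness, linearity, surjectivity, and injectivity---using only the shared kernel hypothesis together with the idempotency of $\bigP_\calA$ and $\bigP_\calB$. The conceptual content is essentially the first isomorphism theorem applied to $\R^d / \calU$, but I would present it as direct manipulations to match the style of the excerpt.

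For well-definedness, the only potential issue is that an element $\boldy \in \calA$ admits many preimages: we could have $\bigP_\calA \boldx_1 = \bigP_\calA \boldx_2$ for $\boldx_1 \neq \boldx_2$. Here is where the shared-kernel hypothesis enters. If $\bigP_\calA \boldx_1 = \bigP_\calA \boldx_2$, then $\boldx_1 - \boldx_2 \in \Ker(\bigP_\calA) = \calU = \Ker(\bigP_\calB)$, and hence $\bigP_\calB \boldx_1 = \bigP_\calB \boldx_2$. So $F(\boldy)$ does not depend on the choice of preimage. Linearity is then immediate: any $\boldy, \boldy' \in \calA$ are of the form $\bigP_\calA \boldx, \bigP_\calA \boldx'$, and linearity of $\bigP_\calB$ together with the well-definedness just proven gives $F(\alpha \boldy + \beta \boldy') = \alpha F(\boldy) + \beta F(\boldy')$.

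Surjectivity is essentially free: every element of $\calB = \Range(\bigP_\calB)$ is of the form $\bigP_\calB \boldx$ for some $\boldx \in \R^d$, and by construction this equals $F(\bigP_\calA \boldx)$. For injectivity, suppose $F(\bigP_\calA \boldx) = 0$, i.e.\ $\bigP_\calB \boldx = 0$. Then $\boldx \in \Ker(\bigP_\calB) = \Ker(\bigP_\calA)$, so $\bigP_\calA \boldx = 0$ as well. Hence $F$ has trivial kernel.

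No step here is technically difficult; the proof is entirely driven by the hypothesis $\Ker(\bigP_\calA) = \Ker(\bigP_\calB)$. The one subtlety worth emphasizing in the write-up is that $F$ is defined on elements of $\calA$ via the formula $F(\bigP_\calA \boldx) = \bigP_\calB \boldx$ applied to an ambient $\boldx \in \R^d$, so well-definedness is genuinely something to check rather than an afterthought. Once that is done, a linear bijection between finite-dimensional subspaces of $\R^d$ is automatically an invertible isomorphism, completing the proof.
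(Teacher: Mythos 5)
Your proof is correct and follows essentially the same route as the paper: well-definedness and injectivity via the shared kernel, linearity from linearity of $\bigP_\calB$. The only (cosmetic) difference is surjectivity, which you obtain directly from $F(\bigP_\calA\boldx)=\bigP_\calB\boldx$ hitting all of $\Range(\bigP_\calB)$, whereas the paper deduces it from injectivity plus the equal dimensions $d-\dim\calU$; your version is if anything slightly more economical.
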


\begin{proof} We prove the Lemma by showing $F$ is well defined, linear and bijective.\\
\textbf{Well defined.}
If $\bigP_{\calA}\boldx=\bigP_{\calA}\boldy$, then
$\bigP_{\calA}(\boldx-\boldy)=\mathbf 0$, so
$\boldx-\boldy\in\calU=\operatorname{Ker}(\bigP_{\calB})$ and therefore
$\bigP_{\calB}(\boldx-\boldy)=\mathbf 0$, i.e.\
$F(\bigP_{\calA}\boldx)=F(\bigP_{\calA}\boldy)$.

\textbf{Linearity.}
For $\boldz_1=\bigP_{\calA}\boldx_1$, $\boldz_2=\bigP_{\calA}\boldx_2$
and $\alpha\in\R$:
\begin{align*}
  &F(\boldz_1+\boldz_2)=\bigP_{\calB}(\boldx_1+\boldx_2)
  =\bigP_{\calB}\boldx_1+\bigP_{\calB}\boldx_2
  =F(\boldz_1)+F(\boldz_2),\\
  &F(\alpha\boldz_1)=\bigP_{\calB}(\alpha\boldx_1)
  =\alpha\bigP_{\calB}\boldx_1=\alpha F(\boldz_1).
\end{align*}

\textbf{Injectivity.}
If $F(\boldz_1)=F(\boldz_2)$, then
$\bigP_{\calB}\boldx_1=\bigP_{\calB}\boldx_2$ and
$(\boldx_1-\boldx_2)\in\calU=\operatorname{Ker}(\bigP_{\calA})$,
so $\boldz_1=\boldz_2$.

\textbf{Surjectivity.}
Both $\calA$ and $\calB$ have dimension
$d-\dim\calU$; a linear, injective map between finite dimensional
spaces of equal dimension is automatically surjective.

Thus $F$ is a linear isomorphism.
\end{proof}

\begin{proof}[Proof of \autoref{thm:with_re-training_no_regularisation}]
Because
\(
  \operatorname{Ker}(\bigP_{\calA})
  =\operatorname{Ker}(\bigP_{\calB})
  =\calU,
\)
Lemma~\ref{lem:isomorphism_common_kernel} provides an invertible linear
map
\[
   F:\calA\;\longrightarrow\;\calB,
   \qquad
   F\!\bigl(\bigP_{\calA}\boldx\bigr)=\bigP_{\calB}\boldx.
\]

\paragraph{Step 1 — Transferring parameters.}
For any $\boldtheta_{\calA}\in\calA$ define
\[
   \boldtheta_{\calB}\;:=\;F^{-\T}\boldtheta_{\calA},
\]
where $F^{-\T}$ denotes the transpose of the inverse map $F^{-1}$.
Then, for every $\boldx\in\R^{d}$,
\[
   \underbrace{\boldx_{\calB}^{\T}\boldtheta_{\calB}}_{%
     =\,(F\boldx_{\calA})^{\T}F^{-\T}\boldtheta_{\calA}}
   \;=\;
   \boldx_{\calA}^{\T}\boldtheta_{\calA},
\]
so the two parameter vectors yield identical predictions.

\paragraph{Step 2 — Empirical minimizers.}
Let
\(
   \boldtheta_{\calA}^*
   :=\argmin_{\boldtheta\in\calA}\mathcal{L}(\X_{\calA}\boldtheta,\Y)
\)
be the (unique) minimizer over $\calA$, and set
\(
   \boldtheta_{\calB}^*:=F^{-\T}\boldtheta_{\calA}^*.
\)
Because $\X_{\calB}=F\X_{\calA}$, the fitted predictions match:
\[
   \X_{\calB}\boldtheta_{\calB}^*
   =F\X_{\calA}\boldtheta_{\calA}^*
   =\X_{\calA}\boldtheta_{\calA}^*.
\]
Hence both parameter choices achieve the same empirical loss value.

\paragraph{Step 3 — Uniqueness over $\calB$.}
Since $\mathcal{L}$ has a \emph{unique} minimizer on $\calB$ and
$\boldtheta_{\calB}^*$ attains the minimal loss, it must coincide with
the optimizer in \eqref{eq:optimization_linear_models}.
Therefore, for every input $\boldx\in\R^{d}$,
\[
   \boldx_{\calA}^{\T}\boldtheta_{\calA}^*
   \;=\;
   \boldx_{\calB}^{\T}\boldtheta_{\calB}^*,
\]
which proves the theorem.
\end{proof}

\subsection{Excess risk of SPLINCE vs. LEACE without re-training the last layer}
\label{app:excess_risk}

In this section, we provide two theorems that show conditions under which SPLINCE is guaranteed to outperform LEACE when the last layer is frozen (e.g. not re-trained, contrasting the set-up of Theorem \ref{thm:with_re-training_no_regularisation}). In Theorem \ref{thm:excess-risk-regression} we prove that for a linear regression with  
whitened data from any distribution, SPLINCE does not degrade the performance of a frozen last layer. In contrast, applying LEACE may lead to an increase in the loss. In Theorem \ref{thm:excess-risk-logistic} we prove a similar statement, but for the case where the embeddings $\boldx$ follow a standard multivariate Gaussian distribution. \\

We emphasize that both theorems are limited in their applicability, as we generally would not expect last-layer embeddings to be whitened or follow a standard multivariate Gaussian distribution. Further work is required to understand more general conditions when SPLINCE might outperform LEACE.

\begin{restatable}[Excess-risk of \textsc{leace} and \textsc{splince} in a regression setting]
              {theorem}{ExcessRiskThmRegression}
\label{thm:excess-risk-regression}
Let \(\X\in\mathbb{R}^{n\times d}\) be whitened data with 
\(\boldSigmaxx =\identity_{d}\) and \(\mathbb{E}[\boldx]=\bigzero_d\).
Assume the response model \(\Y=\X\boldbeta+\vareps\), where  
\(\mathbb{E}[\vareps]=\bigzero_n \), \(\operatorname{Var}(\vareps)=\sigma^{2}\identity_{n}\)  
and \(\mathbb{E}[\vareps^{\top}\X]=\bigzero_d^{\T}\).

Let \(\bigQ_{\textsc{leace}}^{\top}:=\identity_d -\bigP_{\boldz}\), where \(\bigP_{\boldz}\) projects onto
the subspace spanned by a concept variable \(\boldz\);  
let \(\bigQ_{\textsc{splince}}^{\top}:= \bigP\) be a projection satisfying
\(\boldSigmaxy \in\operatorname{Range}(\bigP)\).
Then:
\begin{enumerate}
  \item The excess risk of \textsc{leace} is greater than or equal to zero,
        \( \Delta R(\bigQ_{\textsc{leace}}) \geq 0.\)
  \item The excess risk of \textsc{splince} is zero,
        \( \Delta R(\bigQ_{\textsc{splince}})=0\).
\end{enumerate}
\end{restatable}

\begin{proof}
For any square matrix \(\bigQ\) define the \emph{risk}
\[
   R(\bigQ):=\mathbb{E}\bigl[(\Y-\X\bigQ^{\top}\boldbeta)^{2}\bigr],
\]
and the \emph{excess risk}
\[
   \Delta R(\bigQ):=R(\bigQ)-R(\identity_d).
\]

Using \(\Y=\X\boldbeta+\vareps\) we have
\begin{align}
\Y-\X\bigQ^{\top}\boldbeta
  &= \X\boldbeta+\vareps-\X\bigQ^{\top}\boldbeta \nonumber \\
  &= \vareps + \X(\identity_d-\bigQ^{\top})\boldbeta.  \tag{1}
\end{align}
Hence
\begin{align}
R(\bigQ)
  &= \mathbb{E}\Bigl[
       (\vareps + \X(\identity_d-\bigQ^{\top})\boldbeta)^{\!\top}
       (\vareps + \X(\identity_d-\bigQ^{\top})\boldbeta)
     \Bigr]  \nonumber\\[2pt]
  &= \underbrace{\mathbb{E}[\vareps^{\top}\vareps]}_{\sigma^{2}}
     + 2\,\underbrace{\mathbb{E}\bigl[\vareps^{\top}\X(\identity_d-\bigQ^{\top})\boldbeta\bigr]}_{0}
     + \mathbb{E}\Bigl[
         \boldbeta^{\top}(\identity_d-\bigQ)\X^{\top}\X(\identity_d-\bigQ^{\top})\boldbeta
       \Bigr] \nonumber \\[2pt]
  &= \sigma^{2}
     + \boldbeta^{\top}(\identity_d-\bigQ)\boldSigmaxx(\identity_d-\bigQ^{\top})\boldbeta \nonumber \\[2pt]
  &= \sigma^{2}
     + \| (\identity_d-\bigQ^{\top})\boldbeta \|_{2}^{2}, \tag{2}
\end{align}
because \(\boldSigmaxx=\identity_{d}\).

Subtracting \(R(\identity_d)=\sigma^{2}\) from (2) yields the expression
\[
   \boxed{\;
     \Delta R(\bigQ)=\| (\identity-\bigQ^{\top})\boldbeta \|_{2}^{2}\; }.
\]  

Take \(\bigQ=\bigQ_{\textsc{leace}} = \identity_d-\bigP_{\boldz}\).  
Since \(\bigP_{\boldz}\) is a projector, \((\identity_d-\bigP_{\boldz})^{\top}=\identity_d-\bigP_{\boldz}\); thus
\[
   \Delta R(\bigQ_{\textsc{leace}})
      =\| \bigP_{\boldz}\boldbeta \|_{2}^{2}.
\]
Unless \(\bigP_{\boldz}\boldbeta=\bigzero_d\) (the degenerate case where \(\boldbeta\) lies fully
outside the concept subspace), this quantity is strictly positive. 

Let \(\bigQ=\bigQ_{\textsc{splince}}=\bigP^{\top}\).
By assumption \(\boldSigmaxy\in\operatorname{Range}(\bigP)\), and under
whitening \(\boldbeta=\boldSigma_{xy}\).
Hence \(\bigP\boldbeta=\boldbeta\) and
\(
  (\identity_d-\bigP^{\top})\boldbeta = (\identity_d-\bigP)\boldbeta = \bigzero_d.
\)  
Applying the boxed identity, 
\(
  \Delta R(\bigQ_{\textsc{splince}})=0
\).
\end{proof}

\begin{restatable}[Excess-risk bound of \textsc{leace} and \textsc{splince} in a logistic-regression setting]
              {theorem}{ExcessRiskThmLogistic}
\label{thm:excess-risk-logistic}
Let \(\X\in\mathbb{R}^{n\times d}\) contain i.i.d.\ rows
\(\boldx\sim\mathcal{N}\!\bigl(\bigzero_d,\boldSigmaxx \bigr)\).
Assume the conditional model
\[
  \mathbb{P}(y=1\mid \boldx )=g\!\bigl(\bold\boldx^{\!\top}\boldbeta\bigr),
  \qquad g(s)=\tfrac{1}{1+e^{-s}},
\]
with true parameter \(\boldbeta\in\mathbb{R}^{d}\).

Define
\(\bigQ_{\textsc{leace}}^{\top}:=\identity_d - \bigP_{\boldz}\),
where \(\bigP_{\boldz}\) projects onto the subspace spanned by a concept
variable \(\boldz\);  
define
\(\bigQ_{\textsc{splince}}^{\top}:=\bigP\),
where \(\bigP\) is an orthogonal projector satisfying
\(\boldSigmaxy\in\operatorname{Range}(\bigP)\).

Let
\(
  \Delta_{\!\ell}(\bigQ):=
  \mathbb{E}[\ell(\bold\boldx^{\!\top}\bigQ^{\!\top}\boldbeta,y )-\ell(\bold\boldx^{\!\top}\boldbeta, y)]
\)
denote the (population) excess logistic risk,
with \(\ell(s,y)=\log(1+e^{s})-ys\).

Then:
\begin{enumerate}
  \item (LEACE) 
        \[
          0<\Delta_{\!\ell}\!\bigl(\bigQ_{\textsc{leace}}\bigr)
            \;\le\;
            \frac{1}{8}\,
            \boldbeta^{\!\top}\bigP_{\boldz}\,\boldSigmaxx\,\bigP_{\boldz}\boldbeta.
        \]
        In the whitened case \(\boldSigmaxx=\identity_{d}\) this reduces to
        \(
          \Delta_{\!\ell}(\bigQ_{\textsc{leace}})
          \le \tfrac{1}{8}\lVert \bigP_{\boldz}\boldbeta\rVert_{2}^{2}.
        \)

  \item (SPLINCE) \(\displaystyle
          \Delta_{\!\ell}\!\bigl(\bigQ_{\textsc{splince}}\bigr)=0.
        \)
\end{enumerate}
\end{restatable}

\begin{proof}
Define \[
  s := \boldx^{\!\top}\boldbeta, \qquad
  \hat{s} := \boldx^{\!\top}\bigQ^{\!\top}\boldbeta, \qquad
  \delta(x) := \hat{s}-s
             = \boldx^{\!\top}(\bigQ^{\!\top}-\identity_d)\boldbeta
             = \boldx^{\!\top}(\identity_d-\bigQ^{\!\top})(-\boldbeta).
\]

\paragraph{Three facts about the logistic loss.}
\vspace{-1ex}
\begin{enumerate}\renewcommand\labelenumi{\textbf{Fact \arabic{enumi}.}}
\item \(\ell'(s,y)=\sigma(s)-y\).
\item \(\ell''(s,y)=\sigma(s)\bigl(1-\sigma(s)\bigr)\le\frac14.\)
\item By the mean–value theorem, for some
      \(\theta=\theta(x,y)\in(0,1)\),
      \[
        \ell(\hat{s},y)=\ell(s,y)
                        +\delta\,\ell'(s,y)
                        +\frac{\delta^{2}}{2}\,
                          \ell''\!\bigl(s-\theta\delta,y\bigr).
      \]
\end{enumerate}

Taking expectations and using Fact 3,
\begin{align}
\Delta_{\!\ell}(Q)
   &=\mathbb{E}\bigl[\ell(\hat{s},y)-\ell(s,y)\bigr]  \\[2pt]
   &=\mathbb{E}\!\bigl[\delta\,\ell'(s,y)\bigr]
      +\frac12\,\mathbb{E}\!\bigl[\delta^{2}
            \ell''\!\bigl(s-\theta\delta,y\bigr)\bigr].
\end{align}

Using \(\mathbb{E}[y\mid x]=\sigma(s)\) (by model assumption),
\begin{align}
\mathbb{E}\!\bigl[\delta\,\ell'(s,y)\bigr]
   &=\mathbb{E}_{x}\!\Bigl[\delta(x)\,
       \underbrace{\mathbb{E}[\,\sigma(s)-y\mid x]}_{=0}\Bigr]=0.
\end{align}

Fact 2 says \(\ell''\le\frac14\); hence
\begin{align}
\frac12\,\mathbb{E}\bigl[\delta^{2}\ell''(\cdot)\bigr]
 \;\le\; \frac18\,\mathbb{E}\!\bigl[\delta^{2}\bigr].
\end{align}
Because \(\boldx\sim\mathcal{N}(\bigzero_d,\boldSigmaxx)\),
\begin{align}
\mathbb{E}\bigl[\delta^{2}\bigr]
 &=\boldbeta^{\!\top}(\identity_d-\bigQ)\,\boldSigmaxx\,(\identity_d - \bigQ^{\!\top})\boldbeta.
\end{align}
Thus
\begin{equation}
\boxed{\;
   0\le\Delta_{\!\ell}(\bigQ)
     \le\frac18\,
        \boldbeta^{\!\top}(\identity_d-\bigQ)\boldSigmaxx(\identity_d-\bigQ^{\!\top})\boldbeta
   \;}. \label{eq:master-bound}
\end{equation}

For Gaussian \(x\) and differentiable \(g\),
Stein’s lemma states
\(\mathbb{E}[g(x)\,x]=\boldSigmaxx\,\mathbb{E}[\nabla g(x)]\).
Taking \(g(x)=\sigma(\boldx^{\!\top}\boldbeta)\) (a scalar function) gives
\begin{align}
\boldSigmaxy
  &=\mathbb{E}[\boldx y]
   =\mathbb{E}[\boldx\,\sigma(s)]
   =\boldSigmaxx\,\mathbb{E}\!\bigl[\sigma'(s)\,\boldbeta\bigr]  \\[4pt]
  &=\boldSigmaxx\,\boldbeta\,
      \underbrace{\mathbb{E}\!\bigl[\sigma(s)(1-\sigma(s))\bigr]}_{=:C}.
\end{align}
Because \(C>0\), we can solve for the true parameter:
\begin{equation}
\boldbeta=\frac{1}{C}\,\boldSigmaxx^{-1}\boldSigmaxy.
\label{eq:beta-from-sigmaxy}
\end{equation}

Substituting yields for any \(\bigQ\),
\begin{align}
\Delta_{\!\ell}(\bigQ)
  \le \frac{1}{8C^{2}}\,
       \boldSigmaxy^{\!\top}\boldSigmaxx^{-1}(\identity_d-\bigQ)
       \boldSigmaxx(\identity_d - \bigQ^{\!\top})\boldSigmaxx^{-1}\boldSigmaxy.
\label{eq:bound-sigmaxy}
\end{align}

Take \(\bigQ=\bigQ_{\textsc{leace}}=\identity_d - \bigP_{\boldz}\).\;
Since \((\identity_d-\bigQ)=\bigP_{\boldz}\) is a projector,
\[
  \Delta_{\!\ell}(\bigQ_{\textsc{leace}})
     \le\frac{1}{8C^{2}}\,
        \bigl\lVert \bigP_{\boldz}\boldSigmaxy\bigr\rVert^{2}.
\]
Unless \(\bigP_{\boldz}\boldSigmaxy=\bigzero_d\) (degenerate), the RHS is strictly
positive, proving the first half of the theorem.

Let \(\bigQ=\bigQ_{\textsc{splince}}=\bigP^{\top}\) where
\(\bigP\boldSigmaxy=\boldSigmaxy\).
Then \((\identity_d - \bigQ^{\!\top})\boldSigmaxy=(\identity_d-\bigP)\boldSigmaxy=0\),
so the right-hand side of \eqref{eq:bound-sigmaxy} is zero.
Because \(\Delta_{\!\ell}(\bigQ)\ge0\) by definition,
\(\Delta_{\!\ell}(\bigQ_{\textsc{splince}})=0\).
\end{proof}

\section{Additional results}

In this section, we report several results in addition to  the experiments described in Section \ref{sec:experiments}.

\subsection{Comparing the projections for different levels of regularization}
\label{app:regularization}

In this subsection, we investigate how the difference in performance between SPLINCE, LEACE and SAL changes as a function of regularization. We use the \textit{Bias in Bios} dataset and conduct the experiment as outlined in Section \ref{sec:classification}, but instead of selecting the $l_2$ regularization based on a validation set we fix the level or regularization. The results of this procedure are shown in Figure \ref{fig:SPLINCE_LEACE_per_lambda} and \ref{fig:SPLINCE_SAL_per_lambda}. As we lower the level of $l_2$ regularization (e.g. $\lambda = 0.0001$) the difference between the methods becomes indistinghuishable from zero. 

\begin{figure}[H]
    \centering
    \includegraphics[width=0.8\textwidth]{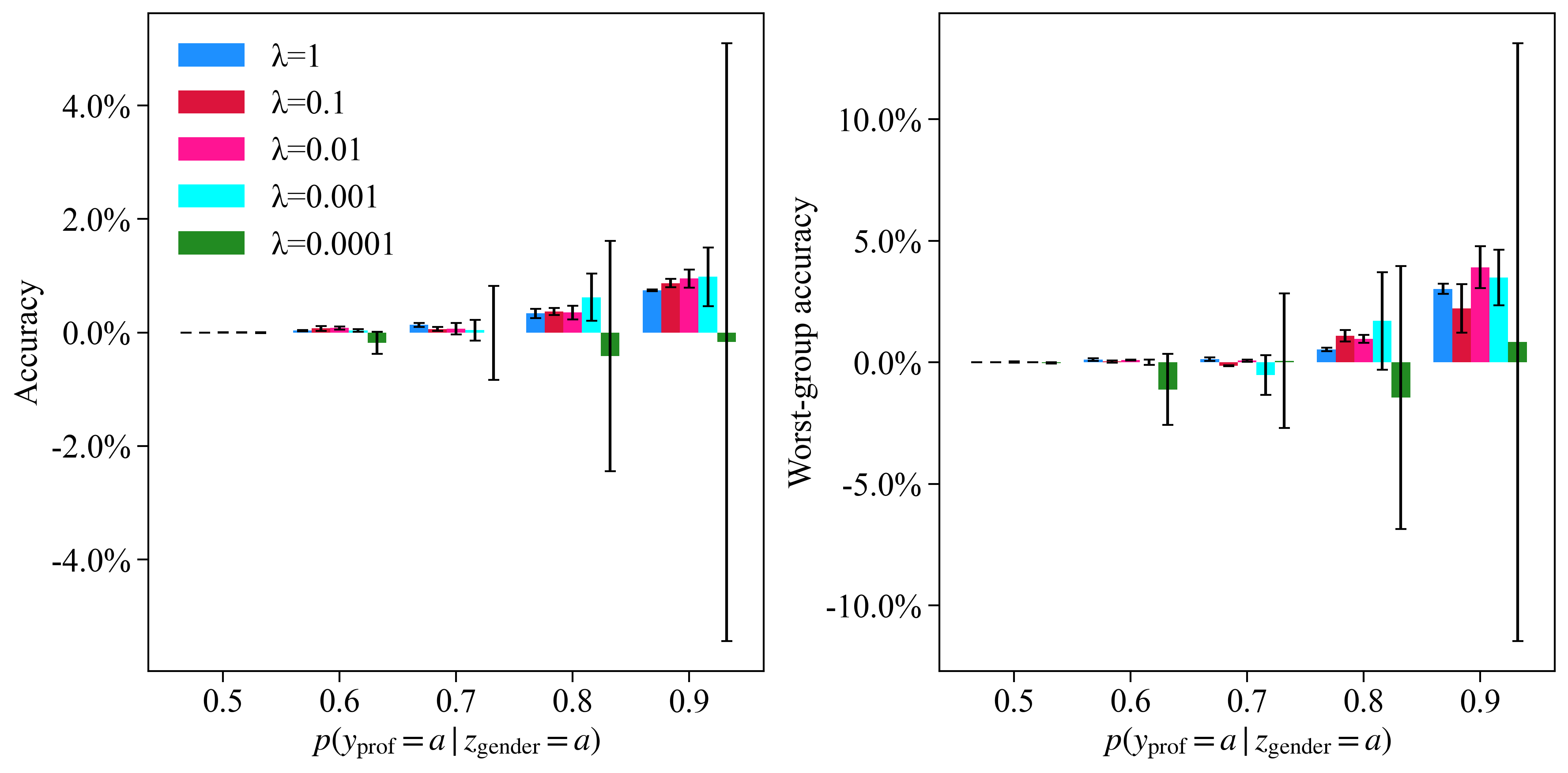}
     \caption{The difference between the SPLINCE and LEACE projections on the  \textit{Bias in Bios} dataset for different levels of $l_2$ regularization. We show the difference (worst-group) accuracy of SPLINCE minus the (worst-group) accuracy of LEACE. We re-train the last-layer after applying each projection. Points are based on the average over 3 seeds. The error bars reflect the 95\% confidence interval.}
    \label{fig:SPLINCE_LEACE_per_lambda}
\end{figure}

\begin{figure}[H]
    \centering
    \includegraphics[width=0.8\textwidth]{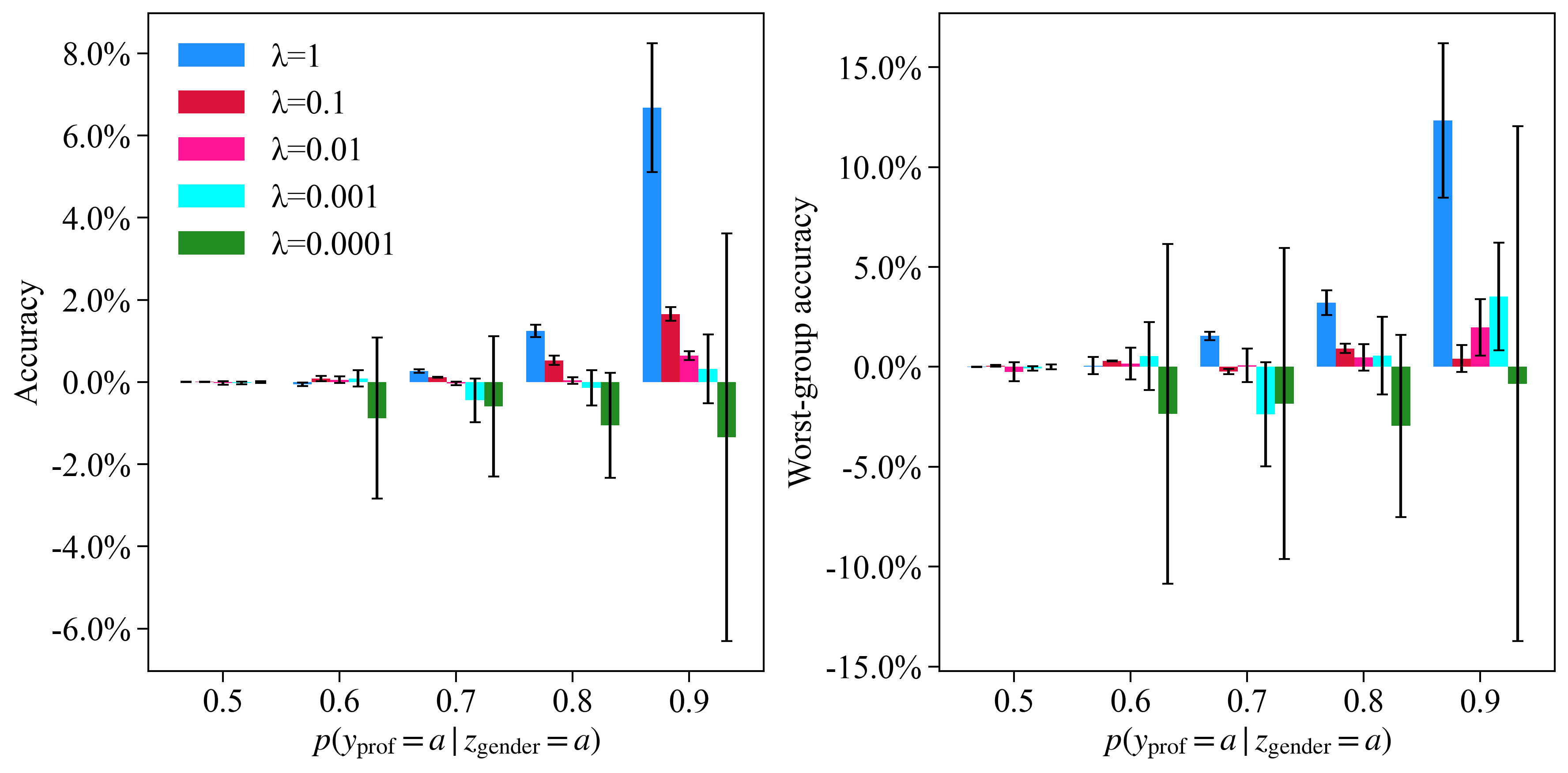}
      \caption{The difference between the SPLINCE and SAL projections on the  \textit{Bias in Bios} dataset for different levels of $l_2$ regularization.  We show the difference (worst-group) accuracy of SPLINCE minus the (worst-group) accuracy of SAL. We re-train the last-layer after applying each projection. Points are based on the average over 3 seeds. The error bars reflect the 95\% confidence interval.}
    \label{fig:SPLINCE_SAL_per_lambda}
\end{figure}

\subsection{Removal of covariance for the \textit{Bias in Bios} dataset for different projections}
\label{app:removal_cov}

In this subsection, we briefly investigate the effect of different projections on the extent to which $\boldSigma_{\boldx, \y_{\prof}}$ is preserved for the \textit{Bias in Bios} dataset.

To quantify the extent to which $\boldSigma_{\boldx, \y_{\prof}}$ is preserved, we measure the ratio of the squared $l_2$ norm of the transformed covariance $\bigP\boldSigma_{\boldx, \y_{\prof}}$ after the projection $\bigP$ and original covariance $\boldSigma_{\boldx, \y_{\prof}}$. Figure \ref{fig:removal_cov} shows the effect of changing the conditional probability $p(\y_{\prof} = a \mid \z_{\gender} = a)$ on this ratio. For SPLINCE, by design, $\boldSigma_{\boldx, \y_{\prof}}$ is preserved regardless of $p(\y_{\prof} = a \mid \z_{\gender} = a)$, and the ratio remains 1. As $p(\y_{\prof} = a \mid \z_{\gender} = a)$ increases, SAL and LEACE lead to a removal of $\boldSigma_{\boldx, \y_{\prof}}$. For instance, at $p(\y_{\prof} = a \mid \z_{\gender} = a) = 0.9$, after LEACE and SAL the ratio between the transformed and original covariances become  respectively 0.07 and 0.001.

\begin{figure}[H]
    \centering
    \includegraphics[width=0.5\textwidth]{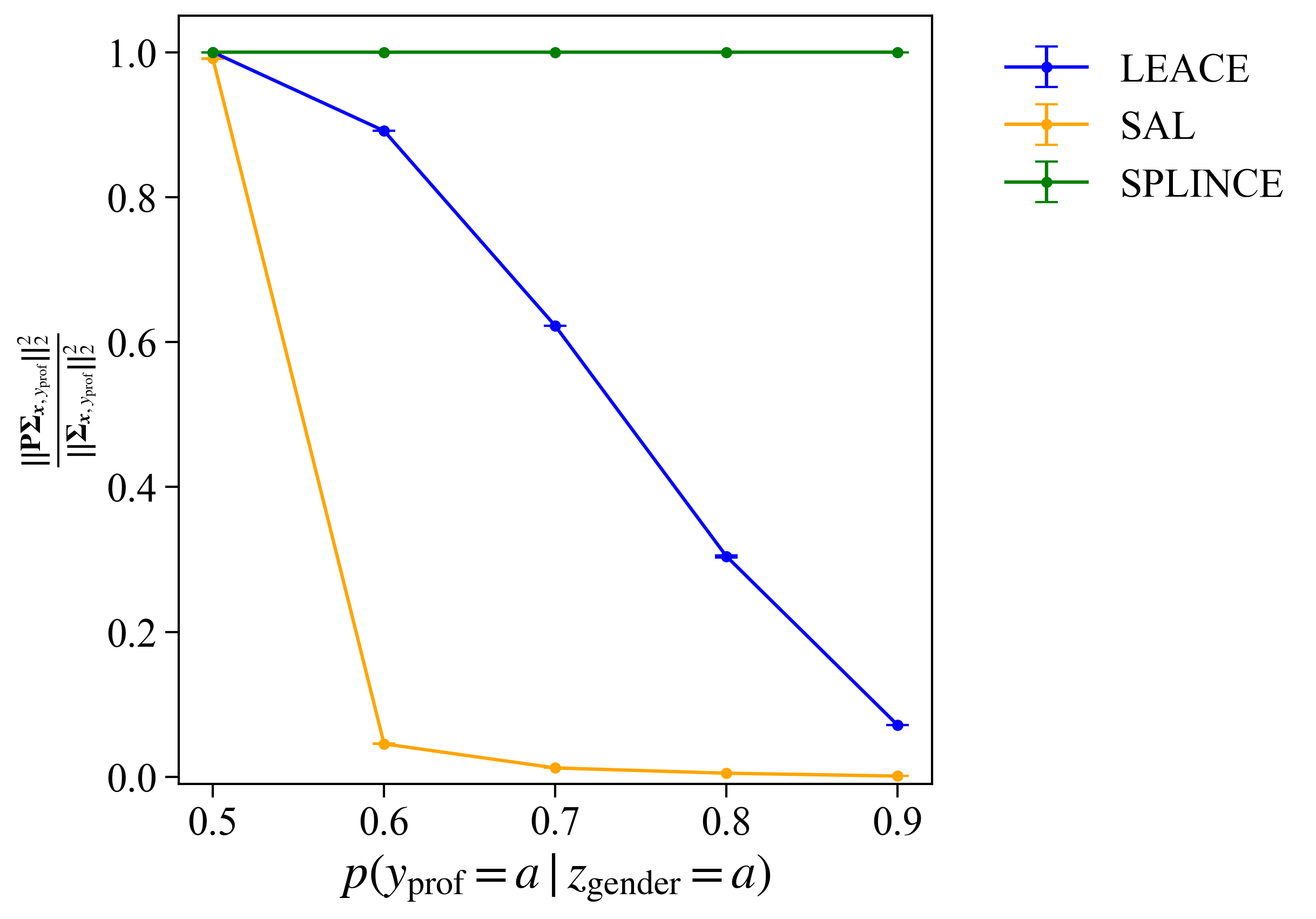}
    \caption{The ratio $\frac{||\bigP\boldSigma_{\boldx, \y_{\prof}}||^2_2}{||\boldSigma_{\boldx, \y_{\prof}}||^2_2}$ as a function of the relationship between $\y_{\prof}, \z_{\gender}$}
    \label{fig:removal_cov}
\end{figure}

\subsection{The mean difference between the original and transformed images for the \textit{CelebA} dataset}

To verify that the dynamics illustrated in Figure \ref{fig:CelebA_individual} hold across images, we measure the average difference between the original image before and after a projection. This is shown in Figure \ref{fig:CelebA_mean} for all combinations of $\z_{\smiling}$ and $\y_{\glasses}$. For individuals with glasses \& not smiling, SPLINCE heavily accentuates the glasses by (on average) making them darker. For individuals without glasses \& smiling, SPLINCE makes the area around the eyes lighter.  

\label{app:celebA}
\begin{figure}[H]
    \centering
    \includegraphics[width=0.6\textwidth]{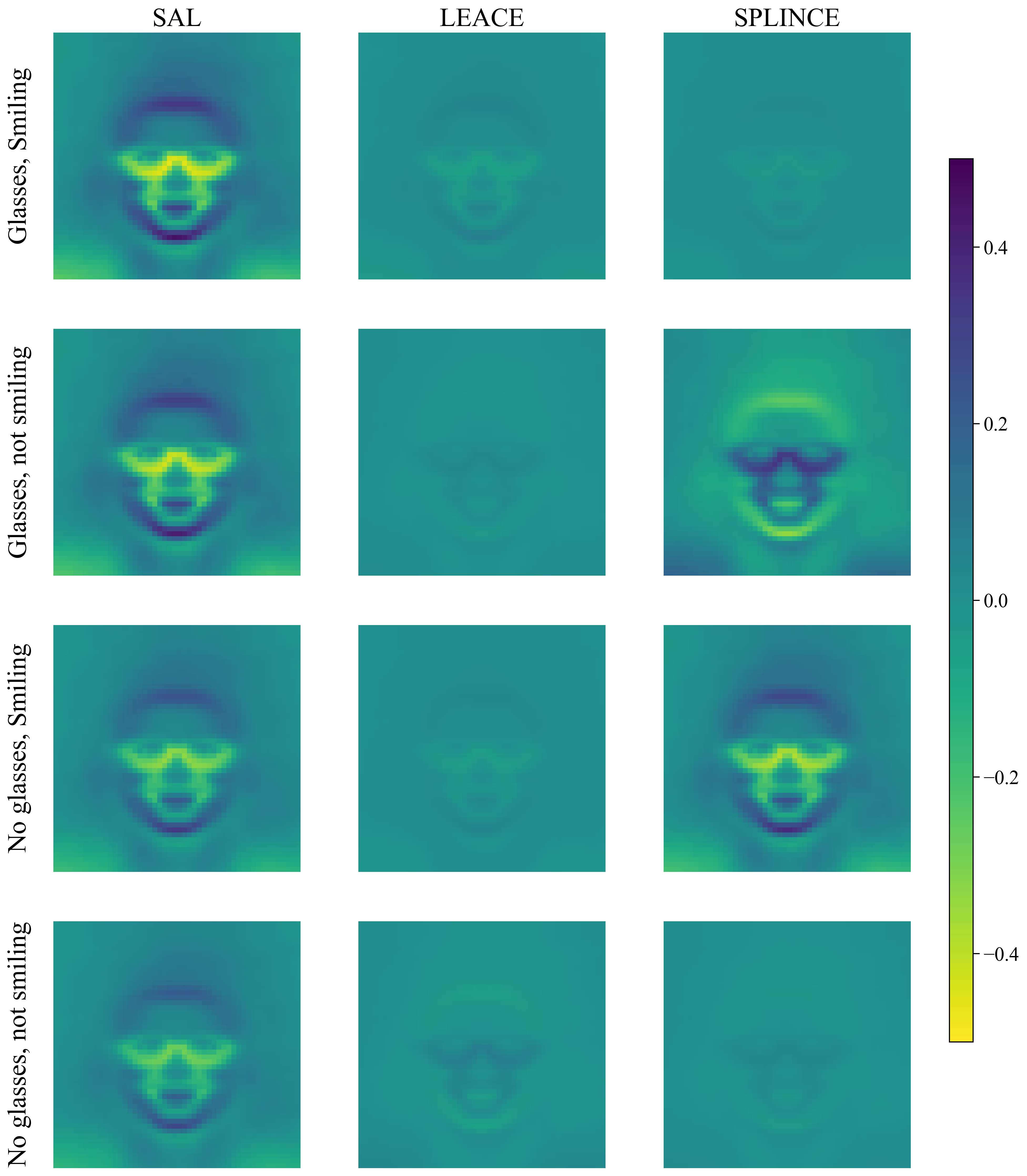}
    \caption{The mean difference between the original image and after a projection for every combination of $\z_{\smiling}, \y_{\glasses}$.}
    \label{fig:CelebA_mean}
\end{figure}

\subsection{Applying the projection to multiple layers}
\label{app:multiple_layers}
Previous work suggests to transform embeddings in multiple, earlier layers in order to amplify the effect of a projection (see, for instance \citet{belrose_2023}, or \citet{limisiewicz2024debiasingalgorithmmodeladaptation} for an example where parameters are adapted via projection). In this subsection, we repeat several of the experiments in Section \ref{sec:experiments} for different layers.

\textit{Bias in Bios}: we apply the projections (SAL, LEACE, SPLINCE) to one of the 5 last layers of a BERT model. In this case, we do not re-train the subsequent layers. The accuracy after this procedure, per layer, is provided in Figure \ref{fig:bios_multiple_layer_acc}, and for worst-group accuracy in Figure \ref{fig:bios_multiple_layer_wg}. For later layers, similar to the results reported in Section \ref{sec:classification},  SPLINCE outperforms the other projections when the conditional probability $p(\y_{\prof} = a \mid \z_{\gender} = a)$ increases.

Per layer, we report the $||\boldSigma_{\boldx, \z_{\gender}}||_2$ and $||\boldSigma_{\boldx, \y_{\prof}}||_2$ in respectively Table \ref{tab:bios_cov_x_z_per_layer} and \ref{tab:bios_cov_x_y_per_layer}.
In the earlier layers (7-10), both  $||\boldSigma_{\boldx, \z_{\gender}}||_2$ and $||\boldSigma_{\boldx, \y_{\prof}}||_2$ are relatively low, indicating relatively little covariance between the embeddings and $\z_{\gender}, \y_{\prof}$. 
As  $||\boldSigma_{\boldx, \z_{\gender}}||_2$ and $||\boldSigma_{\boldx, \y_{\prof}}||_2$  increase in later layers (11-12), the difference between the projections also becomes clearer.

\begin{table}[H]
\parbox{.45\linewidth}{
\centering
\caption{The $||\boldSigma_{\boldx, \z_{\gender}}||_2$ for the biography dataset per layer }
\label{tab:bios_cov_x_z_per_layer}
\begin{tabular}{c|lllll}
\toprule
\multicolumn{1}{l}{\textbf{}} & \multicolumn{5}{c}{$p(\y_{\prof} = a \mid \z_{\gender} = a)$}                                                                                                              \\
Layer                & \multicolumn{1}{c}{0,5} & \multicolumn{1}{c}{0,6} & \multicolumn{1}{c}{0,7} & \multicolumn{1}{c}{0,8} & \multicolumn{1}{c}{0,9} \\
\hline
7                   & 0,30                             & 0,33                             & 0,33                             & 0,33                             & 0,32                             \\
8                   & 0,42                             & 0,44                             & 0,44                             & 0,43                             & 0,43                             \\
9                   & 0,33                             & 0,34                             & 0,35                             & 0,35                             & 0,36                             \\
10                  & 0,34                             & 0,37                             & 0,36                             & 0,33                             & 0,34                             \\
11                  & 1,07                             & 1,14                             & 1,10                             & 1,09                             & 1,02                             \\
12                   & 1,37                             & 1,45                             & 1,47                             & 1,52                             & 1,81            \\
\bottomrule
\end{tabular}
}
\hfill
\parbox{.45\linewidth}{
\centering
\caption{The $||\boldSigma_{\boldx, \y_{\prof}}||_2$ for the biography dataset per layer }
\label{tab:bios_cov_x_y_per_layer}
\begin{tabular}{c|lllll}
\toprule
\multicolumn{1}{l}{\textbf{}} & \multicolumn{5}{c}{$p(\y_{\prof} = a \mid \z_{\gender} = a)$}                                                                                                              \\
Layer                & \multicolumn{1}{c}{0,5} & \multicolumn{1}{c}{0,6} & \multicolumn{1}{c}{0,7} & \multicolumn{1}{c}{0,8} & \multicolumn{1}{c}{0,9} \\
\hline
7                    & 0,12                    & 0,14                    & 0,18                    & 0,23                    & 0,28                    \\
8                    & 0,11                    & 0,14                    & 0,21                    & 0,28                    & 0,36                    \\
9                    & 0,09                    & 0,12                    & 0,18                    & 0,24                    & 0,31                    \\
10                   & 0,19                    & 0,25                    & 0,27                    & 0,26                    & 0,31                    \\
11                   & 0,45                    & 0,60                    & 0,71                    & 0,89                    & 0,94                    \\
12                   & 0,57                    & 0,72                    & 1,02                    & 1,36                    & 1,85                        \\
\bottomrule
\end{tabular}
}
\end{table}

\begin{figure}[H]
    \centering
    \includegraphics[width=0.68\textwidth]{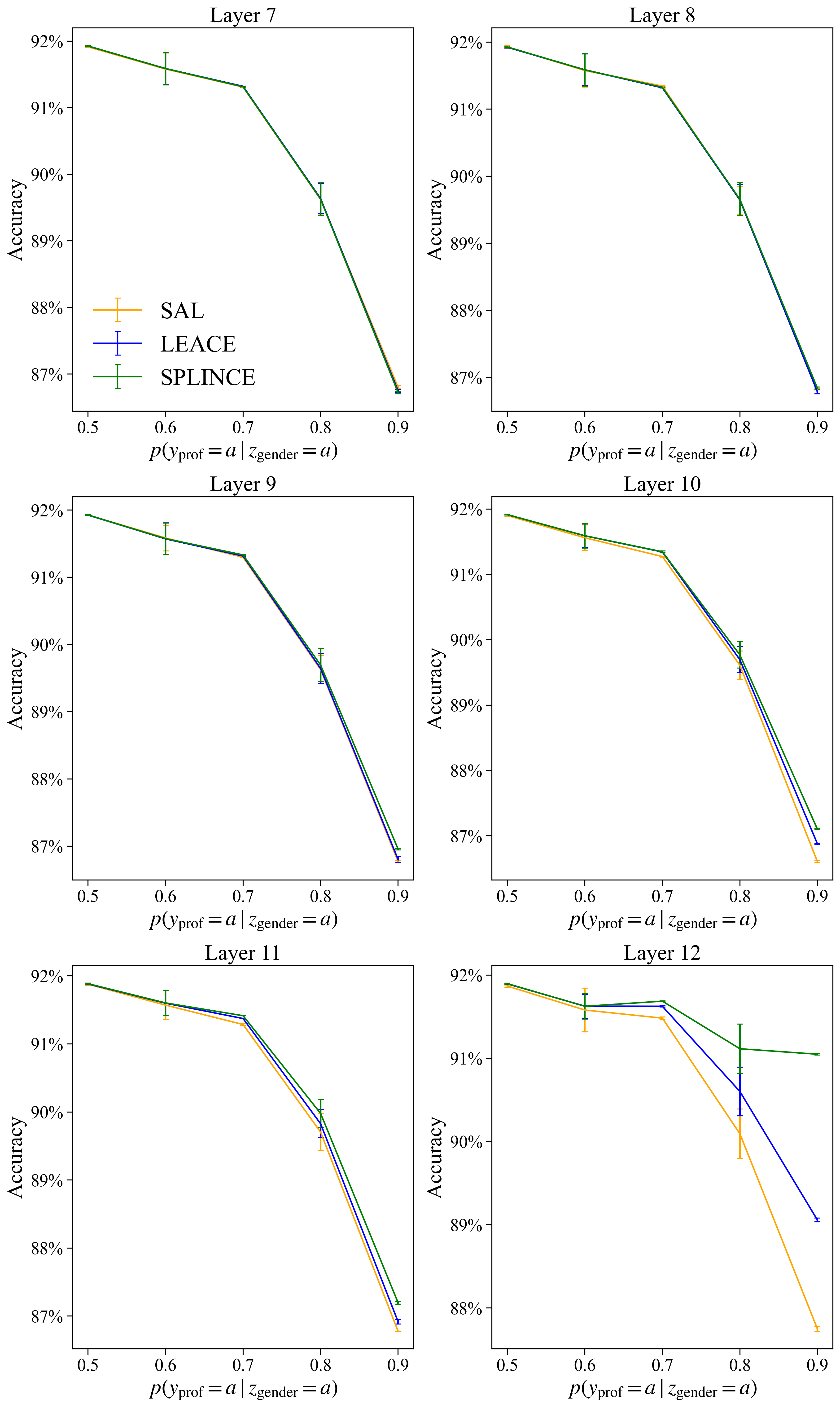}
    \caption{Average accuracy for different projections on the \textit{Bias in Bios} dataset, for each of the 5 layers preceding the last layer. We do not re-train the subsequent layers after applying the projection. The points are the the average over 3 seeds. The error bars reflect with the 95\% confidence interval.}
    \label{fig:bios_multiple_layer_acc}
\end{figure}

\begin{figure}[H]
    \centering
    \includegraphics[width=0.68\textwidth]{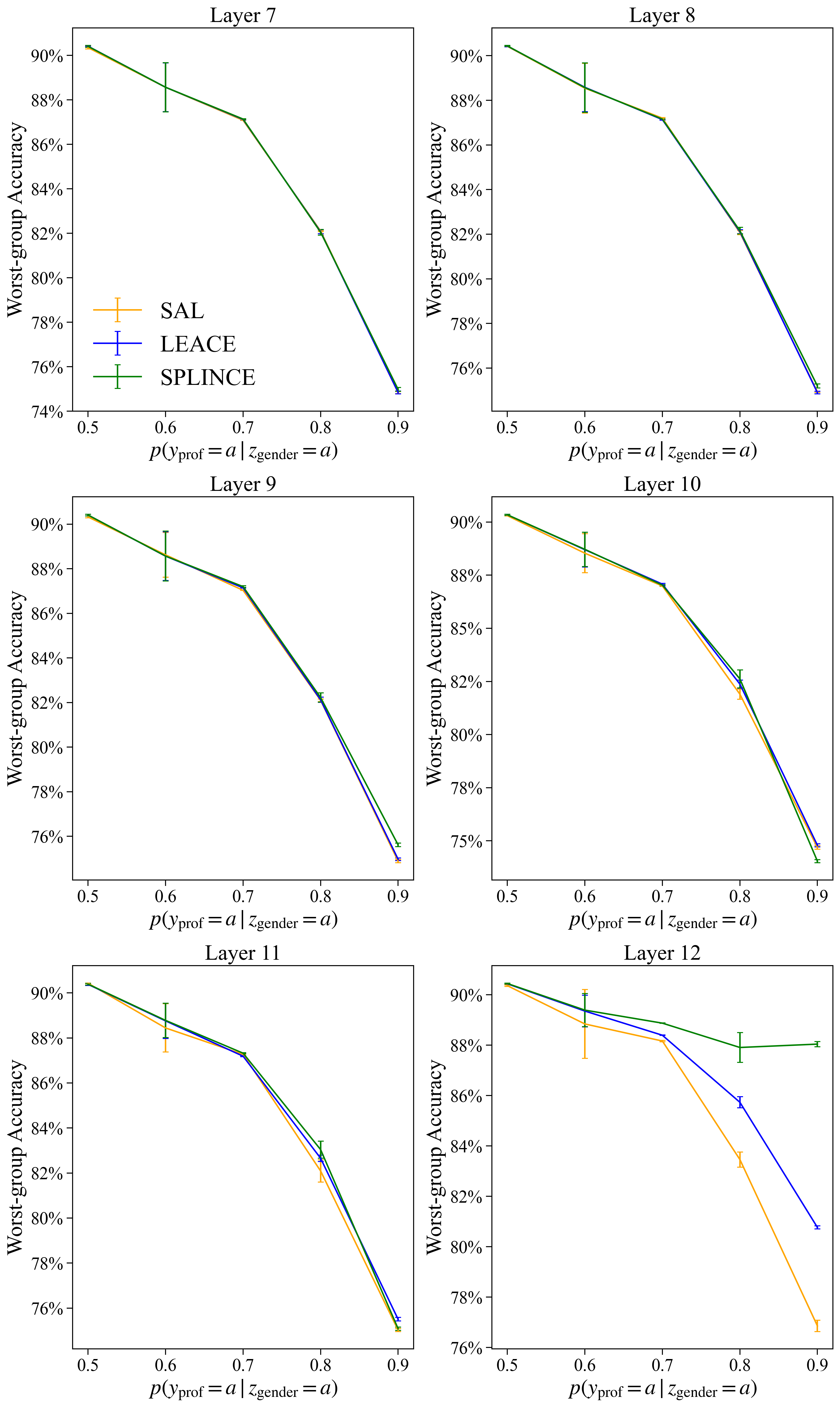}
    \caption{Worst-group accuracy for different projections on the Biography dataset, for the 5 layers preceding the last layer. We do not re-train the last-layer after applying the projection. The points are the the average over 3 seeds. The error bars reflect with the 95\% confidence interval.}
    \label{fig:bios_multiple_layer_wg}
\end{figure}

\newpage

\textit{Profession dataset}: When applying the projections, we start by the first layer in the sequence of layers where we alter the embeddings. After determining the projection for the embeddings at this layer, we subsequently determine it for the next, taking into account the projection of the previous layer.  Table \ref{tab:profession_all} shows the result of this procedure on Llama 2 7B. It remains the case that after SPLINCE applying SPLINCE to multiple layers, the model relies to a greater extent on factual information than when using SAL or LEACE.

\begin{table}[H]
\begin{center}
\caption{Results of applying different projections to different layers for the \textit{profession dataset} on Llama 2 7B.}
\label{tab:profession_all}

\begin{tabular}{cccll}
\toprule
\multicolumn{1}{l}{Model}    & Layers  & Method & $\exp(\betastereo)$ & $\exp(\betafact)$  \\
\hline
\multirow{12}{*}{Llama 2 7B} & \multirow{4}{*}{Last 3} & Original         & 3,59 & 15,71  \\
&  & SAL  & 1,14 & 4,07   \\
&  & LEACE  & 1,04 & 14,30  \\
&   & SPLINCE & 1,00 & 37,94$^*$   \\
 \cline{2-5}
& \multirow{4}{*}{Last 5} & Original         & 3,59 & 15,71  \\
 &  & SAL   & 0,86 & 5,29   \\
 &  & LEACE   & 0,63 & 14,35   \\
 &   & SPLINCE & 0,64 & 15,09$^*$   \\
  \cline{2-5}
  & \multirow{4}{*}{Last 9} & Original         & 3,59 & 15,71  \\
    &      & SAL          & 1,18 & 6,48   \\
 &   & LEACE        & 0,90 & 26,12  \\
    &        & SPLINCE & 0,87 & 78,17$^*$ \\
                             \bottomrule
\end{tabular}
\end{center}
\small Note: the $^*$ indicates that difference between the factual coefficient of our projection and the factual coefficient of LEACE is statistically significant at the 1\% level according to a one-tailed $t-$test. The exponent of the coefficients estimates how the odds ratio changes with a one-unit change in $\z_{\stereotype}$ and $\y_{\fact},$ respectively.
\end{table}

\textit{Winobias dataset}: Similar to the \textit{Profession} dataset, we apply the projections to embeddings of subsequent layers, taking into account the projection at the previous layer. As illustrated per Figure \ref{fig:winobias_multiple_layers}, we observe that the performance of SPLINCE decreases as we apply it to more layers.  Potentially, this is because of the (large) dimensionality of $\boldSigma_{\boldx, \y_{\profession}} \in \mathbb{R}^{d \times 40}$. This result highlights a potential limitation of our projection.

\begin{figure}[H] 
    \centering
     \includegraphics[width=0.65\textwidth]{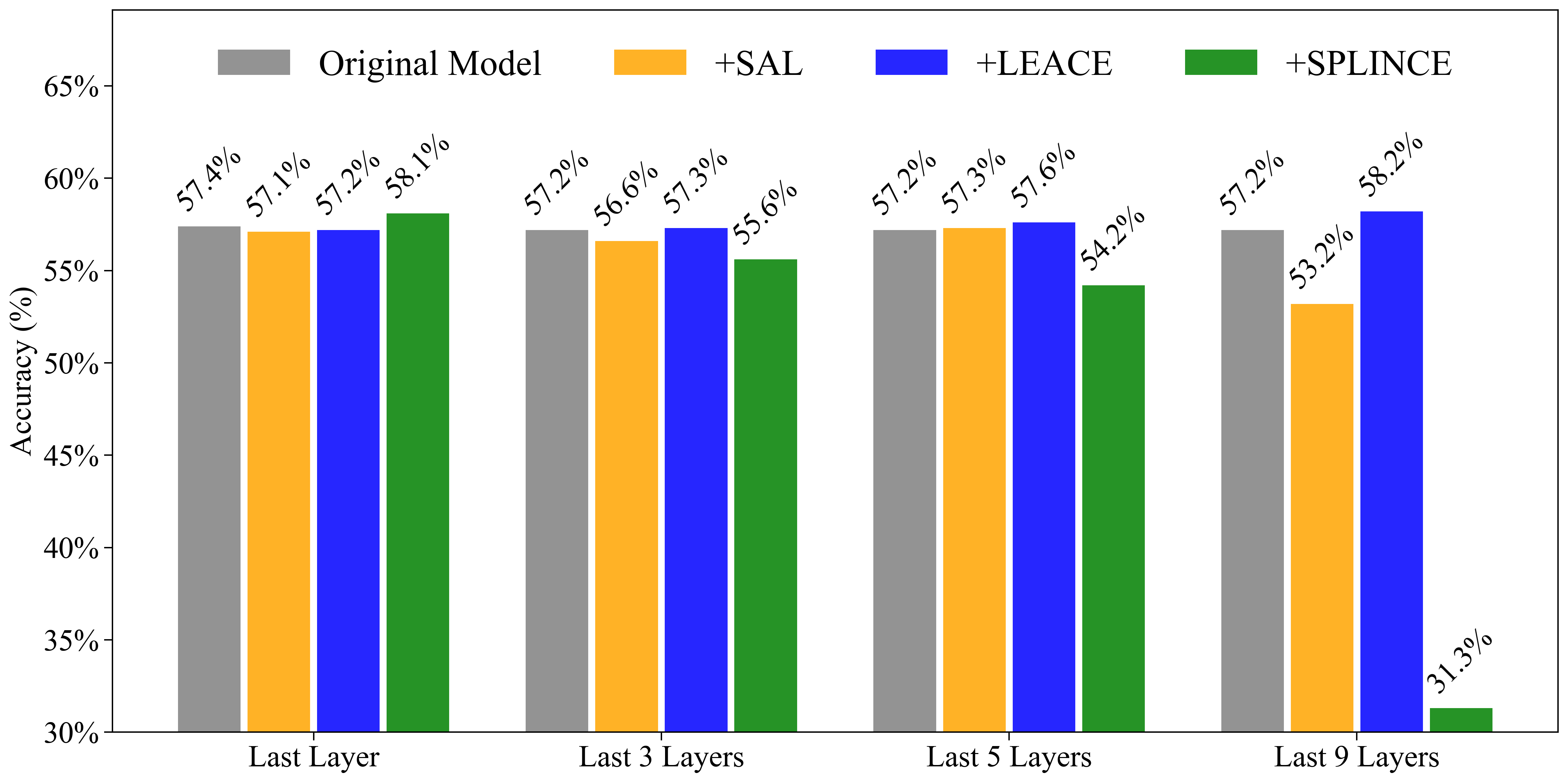}
    \caption{Results of applying different projections to multiple layers on the overall accuracy for the \textit{Winobias} dataset for the Llama 2 7B model.}
    \label{fig:winobias_multiple_layers}
\end{figure}

\subsection{Additional results for vision datasets}
\label{app:vision}

In this section, we briefly investigate the performance of SPLINCE and other projections on two vision datasets: Waterbirds \citet{Sagawa_2020} and the CelebA dataset described in Section \ref{sec:image_experiment}. Similar to the experiments in Section \ref{sec:classification}, we alter the extent to which the main-task co-occurs with the concept. For the Waterbirds, we seek to predict whether or not a land or waterbird is present ($y_{\mathrm{bird}} \in \{0, 1\}$), while removing the concept of the land or water background, denoted $z_{\mathrm{back}} \in \{0, 1\}$. For the CelebA dataset, we alter the extent to which an image with glasses $y_{\glasses}$ co-occurs with $z_{\smiling}$. 
Details on the datasets and training procedure can be found in Appendix \ref{app:experiments}. We also compare SPLINCE to two benchmark out-of-distribution (OOD) generalization methods. First, deep feature reweighting (DFR, \citet{kirichenko2022last}), where a model is trained on a subsampled dataset, where each combination of the main-task and concept occurs with equal probability. We implement the version of DFR where  the subsampled dataset comes from the training data, referred to as $\mathrm{DFR}_{\mathrm{TR}}$ in \citet{kirichenko2022last}. Second, we apply group distributional robust optimization (GDRO, \citet{Sagawa_2020}) to the last layer. Details on the implementation of both methods can be found in Appendix \ref{app:training}. We compare SPLINCE to these two methods for the vision datasets, as well as the NLP classification tasks outlined in \ref{sec:classification}. \\

Figure \ref{fig:vision_proj} compares each projection for the \textit{Waterbirds} and \textit{CelebA} dataset. For the Waterbirds dataset, the performance of each projection ( SAL, LEACE, SPLINCE) strongly deteriotates as the correlation between the main-task and the concept increases. For the CelebA dataset, ERM gives a superior performance compared to the projections. These results indicate that concept-removal methods such as SPLINCE, as well as SAL and LEACE, perform relatively worse on the last-layer embeddings of vision datasets rather than those generated for NLP tasks. This is in line with previous work \citep{pmlr-v235-holstege24a} and an interesting direction for future research. 

\begin{figure}[H]
    \centering
    \includegraphics[width=0.68\textwidth]{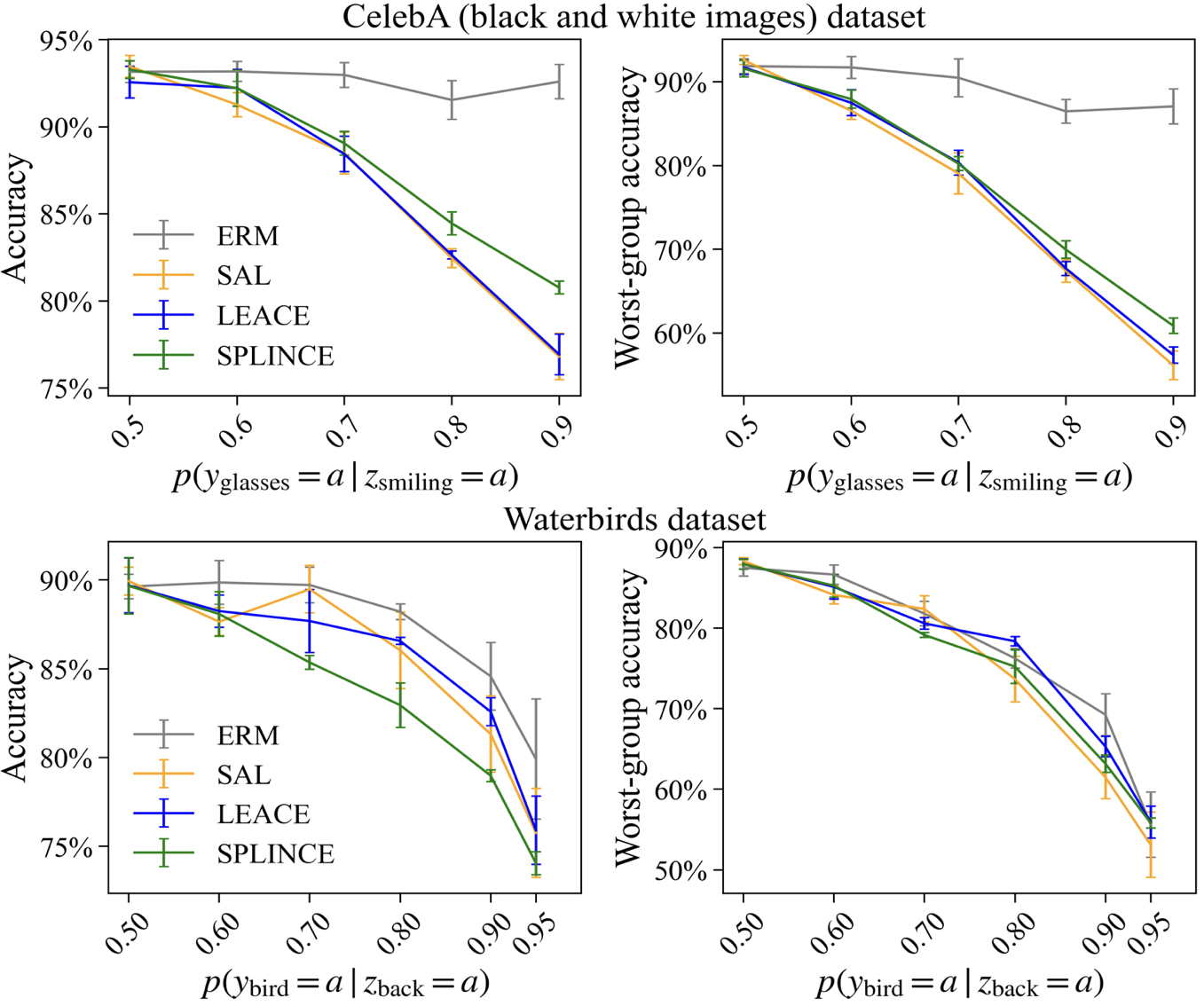}
    \caption{Performance of different projections on the \textit{Waterbirds} and \textit{CelebA} dataset. We re-train the last-layer after applying each projection. Points are based on the average over 5 seeds for each of the two datasets. The error bards reflect the 95\% confidence interval.}
    \label{fig:vision_proj}
\end{figure}

In Figure \ref{fig:vision_comp} we present the comparison of SPLINCE to $\mathrm{DFR}_{\mathrm{TR}}$ and GDRO for the \textit{Waterbirds} and \textit{CelebA} dataset. Both methods strongly outperform SPLINCE. It is worth emphasizing that SPLINCE is not explicitly designed to achieve strong out-of-distribution (OOD) generalization - rather to achieve certain fairness guarantees (e.g. linear guardedness) while maintaining main-task performance.  In Figure \ref{fig:vision_comp} we present the comparison of SPLINCE to $\mathrm{DFR}_{\mathrm{TR}}$ and GDRO for the \textit{Bias in Bios} and \textit{Multilingual Text Detoxification} dataset. SPLINCE performs similar to both methods for the \textit{Bias in Bios} dataset, and outperforms both (at a high correlation) for the \textit{Multilingual Text Detoxification} dataset. This result is further empirical evidence that SPLINCE performs better for NLP  tasks than vision datasets - as it is able to perform similar or better than methods designed for OOD generalization ( $\mathrm{DFR}_{\mathrm{TR}}$ and GDRO) for these two datasets. One potential reason that SPLINCE strongly outperforms $\mathrm{DFR}_{\mathrm{TR}}$ and GDRO for the \textit{Multilingual Text Detoxification} dataset is that there is a greater number of possible combinations of the main-task and concept, as well as a smaller sample size, causing   $\mathrm{DFR}_{\mathrm{TR}}$ and GDRO to overfit on the training data.

\begin{figure}[H]
    \centering
    \includegraphics[width=0.68\textwidth]{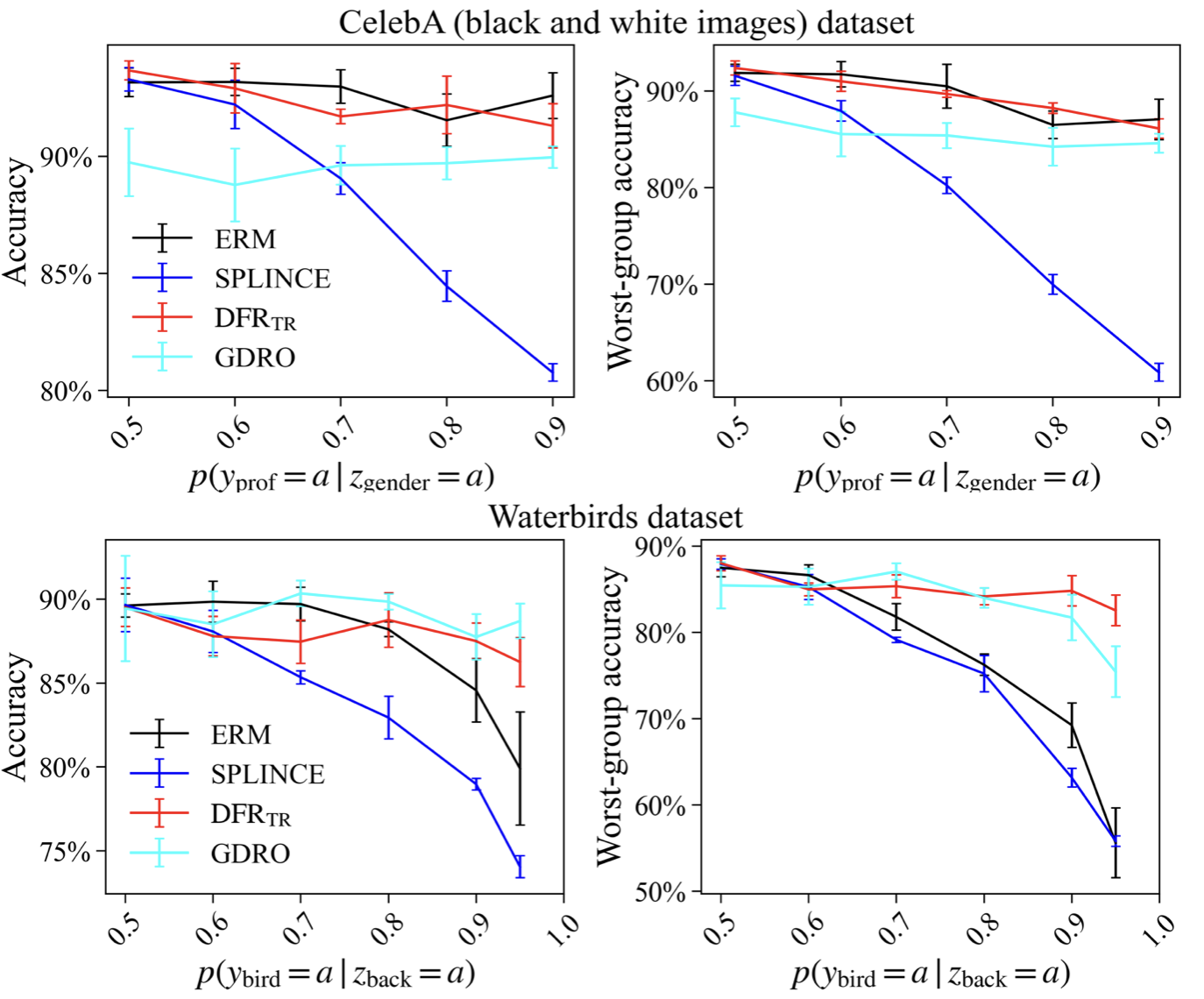}
     \caption{Performance of SPLINCE compared to deep feature reweighting ($\mathrm{DFR}_{\mathrm{TR}}$) and Group Distributional Robust Optimization (GDRO) on the \textit{Waterbirds} and \textit{CelebA} dataset. Each method is applied to the last layer embeddings. Points are based on the average over 5 seeds for each of the two datasets. The error bards reflect the 95\% confidence interval.}
    \label{fig:vision_comp}
\end{figure}

\begin{figure}[H]
    \centering
    \includegraphics[width=0.68\textwidth]{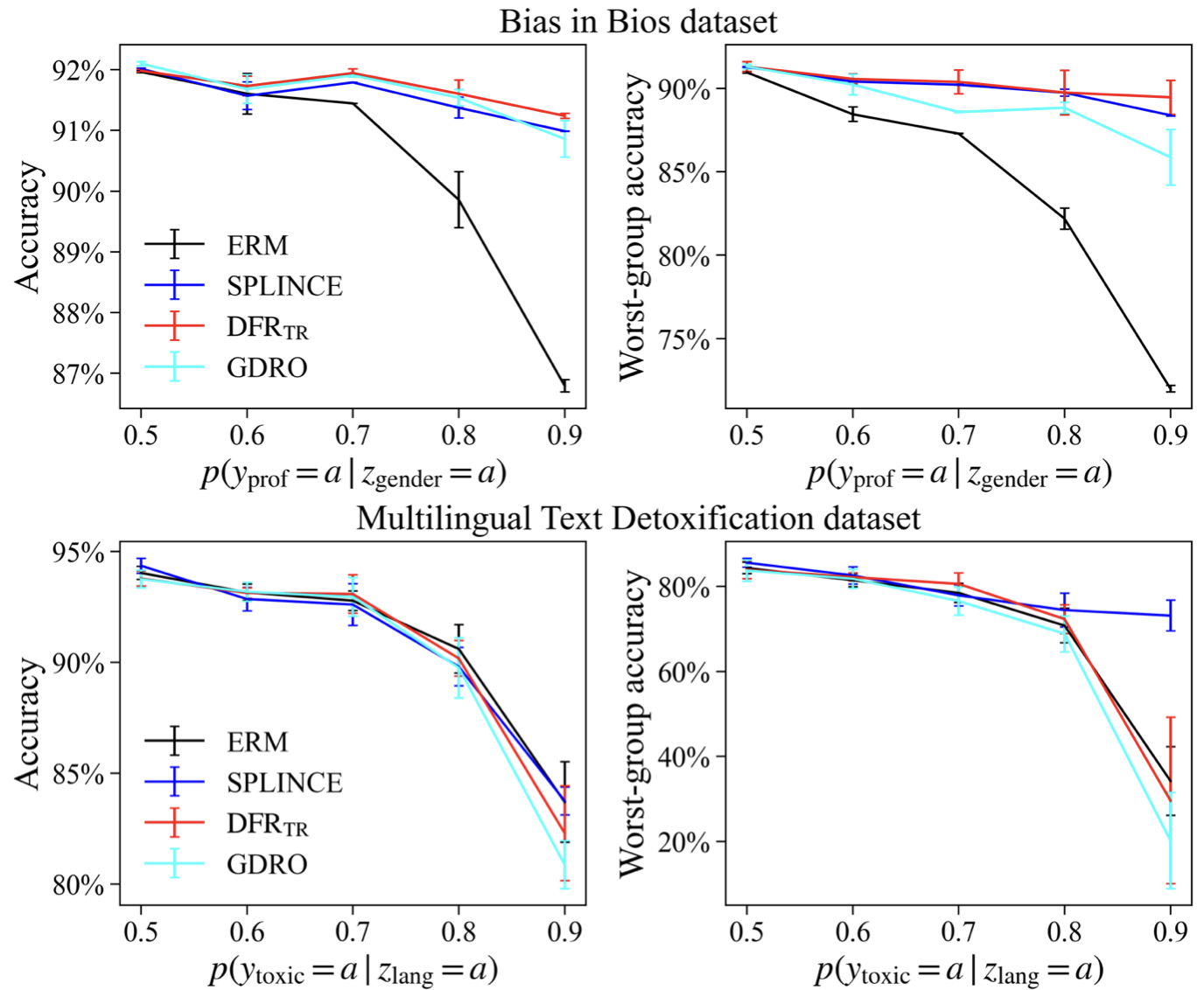}
     \caption{Performance of SPLINCE compared to deep feature reweighting ($\mathrm{DFR}_{\mathrm{TR}}$) and Group Distributional Robust Optimization (GDRO) on the \textit{Bias in Bios} and \textit{Multilingual Text Detoxification} dataset. Each method is applied to the last layer embeddings. Points are based on the average over 5 seeds for each of the two datasets. The error bards reflect the 95\% confidence interval.}
    \label{fig:nlp_comp}
\end{figure}

\subsection{Additional results for Large Language Models}
\label{app:llms}

In this section, we investigate SPLINCE and other projections on the same language tasks as outlined in \ref{sec:llms}, but for two additional  LLMs: the Mistral v0.3 7B model \citep{jiang2023mistral7b}, and the Phi-2 model \citep{javaheripi2023phi}  from Microsoft. In both cases, we use the base models as available on Huggingface, and we apply each projection to the last layer embeddings.

The results for the \textit{profession dataset} are presented in Table \ref{tab:profession_extra}, and for the \textit{Winobias} dataset in Figure \ref{fig:winobias_extra}. For the \textit{profession dataset}, the results are in line with the results for the Llama models as presented in Table \ref{tab:profession_main}. After applying each projection, the extent to which the models rely on factual information is greatly reduced, but this reduction is smallest for SPLINCE. For the \textit{Winobias} dataset, we observe little to no change for the Phi-2 model after any of the projections. Most likely this is related to the overall poor performance of the Phi-2 model on this task, potentially due to its relatively smaller size compared to the other models (2.7B parameters). For the Mistral 7B v0.3 model, we observe an increase in the accuracy on anti-stereotypical prompts after applying SPLINCE.

\begin{table}[H]
\begin{center}
\caption{Results of applying different projections to the last layer of the Mistral 7B v0.3 and Phi-2 models for the \textit{profession dataset}.}
\small
\label{tab:profession_extra}
\begin{tabular}{clll}
\toprule
Model                       & \multicolumn{1}{c}{Projection} & $\exp(\betastereo)$ & $\exp(\betafact)$  \\
\hline
  \noalign{\vskip 1pt} 
\multirow{4}{*}{Mistral 7B v.03} & Original & 3,70 & 24,62 \\
                              & +SAL      & 0,95 & 4,86  \\
                              & +LEACE    & 1,34  & 10,0 \\
                              & +SPLINCE     & 1,37 & 14,76$^*$   \\
                              \hline
\multirow{4}{*}{Phi-2 (2.7B) }  & Original & 1,77 & 15,72  \\
                                & +SAL   &    0,77 & 5,44  \\
                                & +LEACE    &   0,77 & 7,58 \\
                                & +SPLINCE   &    0,74 & 11,23$^*$ \\
                            \bottomrule
\end{tabular}
\end{center}
\small Note: the $^*$ indicates that difference between the factual coefficient of our projection and the factual coefficient of LEACE is statistically significant at the 1\% level according to a one-tailed $t-$test. The exponent of the coefficients estimates how the odds ratio changes with a one-unit change in $\z_{\stereotype}$ and $\y_{\fact},$ respectively.
\end{table}

\begin{figure}[H]
    \centering
    \includegraphics[width=1.0\textwidth]{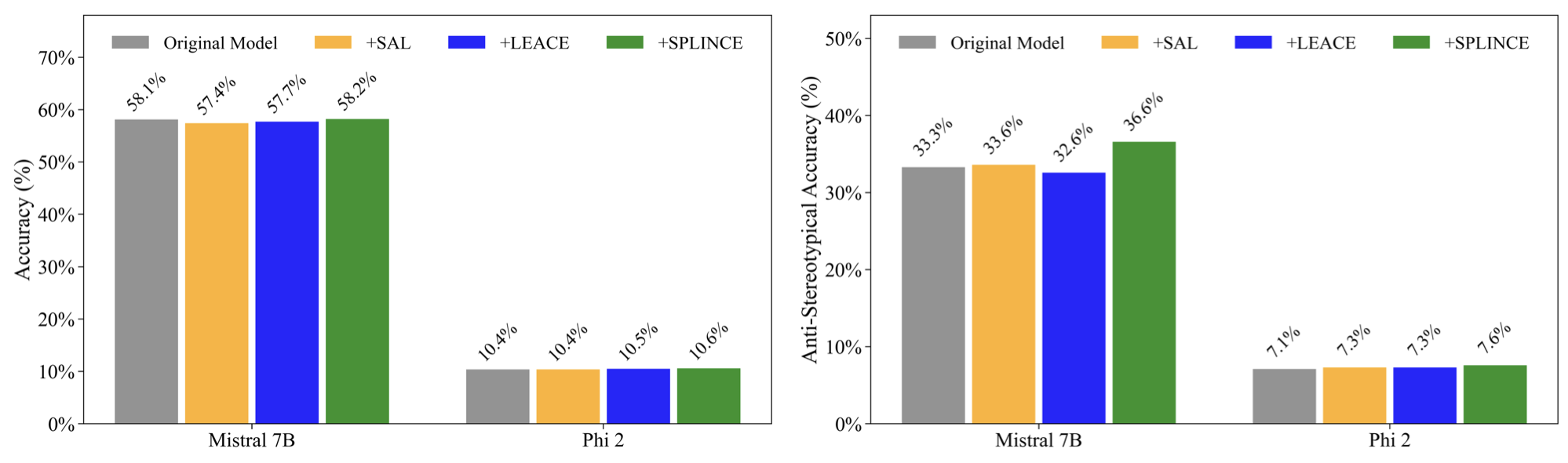}
    \caption{Results of applying different projections to the last layer of the Mistral 7B v0.3 and Phi-2 models for the \textit{Winobias} dataset. The left plot shows The left plot shows the accuracy on a test set consisting of half pro-stereotypical
and half anti-stereotypical prompts. The right plot shows the accuracy on the anti-stereotypical prompts in this test set.}
    \label{fig:winobias_extra}
\end{figure}

\subsection{Applying the projections to the full \textit{Bias in Bios} dataset}
\label{app:bios_full}

For completeness, we also show results for applying each projection to the complete \textit{Bias in Bios} dataset, with all 28 professions, in Table \ref{tab:bios_all}. Here, we observe that when not re-training the last layer, both SPLINCE and LEACE outperform SAL. This is presumably because both SPLINCE and LEACE are better able to preserve the original embeddings compared to SAL.  When re-training, all methods become statistically indistinghuishable in terms of performance, despite being trained with regularization (in contrast to the set-up discussed in Theorem \ref{thm:with_re-training_no_regularisation}). Potentially this is because the relationship between the 28 professions and gender is not as strong as in the setting considered in Section \ref{sec:classification}.

\begin{table}[H]
\begin{center}
\caption{Results for the complete \textit{Bias in Bios} dataset}
\label{tab:bios_all}
\small
\begin{tabular}{l|lll|lll}
\toprule
                           & \multicolumn{3}{c}{\textit{Last layer not   re-trained}}                                                  & \multicolumn{3}{c}{\textit{Re-trained}}                                                                   \\
Method & Acc. & Acc. per class & TPR Gap & Acc. & Acc. per class & TPR Gap \\
\hline
Original             & 81.31 (0.13)                     & 65.52 (0.17)                            & 14.20 (0.09)                & 81.55 (1.15)                     & 72.94 (0.78)                            & 14.24 (0.08)                \\
SAL                        & 78.35 (0.19)                     & 62.17 (0.20)                            & 13.26 (0.10)                & 81.47 (1.12)                     & 72.30 (0.85)                            & 12.90 (0.17)                \\
LEACE                      & 81.07 (0.13)                     & 65.09 (0.17)                            & 12.12 (0.05)                & 81.62 (1.20)                     & 72.74 (1.26)                            & 13.13 (0.26)                \\
SPLINCE                     & 81.11 (0.13)                     & 65.22 (0.16)                            & 12.36 (0.07)                & 81.64 (1.07)                     & 73.08 (1.14)                            & 13.27 (0.01)     \\
\bottomrule
\end{tabular}
\end{center}
\small Note: the average is based on three random seeds. The standard error is reported between brackets. The `Acc. per class' refers to the average accuracy over all 28 professions. The `TPR Gap' refers to the difference in the True Positive Rate for biographies of males and females.  
\end{table}

\section{Additional information on experiments}
\label{app:experiments}

\subsection{Datasets}
\label{app:datasets}
Below, we provide additional details on each dataset used in Section \ref{sec:experiments}. 

\textit{Bias in bios dataset}: the original dataset consists of 28 professions, with 255,710 samples in the training set and 98,344 samples in the test set. 
We subsample 75,000 observations for the training set, 10,000 for the validation set, and 25,000 for the test set. For all three sets, we subsample such that $p(\y_{\prof}=1)=0.5$.  
For the test set, we break the relationship between the professor profession and gender by setting $p(\y_{\prof} =a \mid \z_{\gender} = a)=0.5$ for $a \in \{0, 1\}$. For the training and validation set, we vary $p(\y_{\prof} =a \mid \z_{\gender} = a)$ to study how different projections perform as a function of the relationship between the profession and gender.

\textit{Multilingual dataset}: we use the dataset from \citet{dementieva2024overview}, as  hosted on Huggingface
\href{https://huggingface.co/datasets/textdetox/multilingual_toxicity_dataset}{here}. After filtering the dataset for our selected languages (English, French, German),  we subsample 3334 observations from the training set, 446 for the validation set, and 800 for the test set. 
For the test set, we break the relationship between the English language and sentiment by setting $p(\y_{\tox}=1 \mid z_{\lang} = 1) = 0.5$. 
Below, we give an example of the division of samples of the dataset when $p(\y_{\tox}= 1 \mid z_{\lang} 
 = 1) = 0.9$.

\begin{table}[H]
\centering
\caption{Example of the division of samples in the \textit{Multilingual Text Detoxification} dataset for $p(\y_{\tox}= 1 \mid z_{\lang} 
 = 1) = 0.9$ }
\begin{tabular}{llll}
\toprule
                         & English & German & French \\
                         \midrule
$y = 1$ (toxic)     & 1500             & 83             & 83             \\
$y = 0$ (not toxic)  & 166              & 750            & 750      \\
\bottomrule
\end{tabular}
\end{table}

\textit{Profession dataset}: consists of a total of 320 professions. We randomly select 219 professions for the training set, and use the remaining 101 for the test set.   This is different from \citet{limisiewicz2024debiasingalgorithmmodeladaptation}, since we select a random set of professions for training, and use the remaining professions for testing. Each profession has an associated $\z_{\stereotype}$ and $y_{\fact}$ score, based on \citet{bolukbasi_2016}.  Per profession, we create 11 prompts, based on the prompt templates in Table \ref{tab:profession_dataset_templates}. This leads to a training set that consists of 2409 samples. For the test set, we subsample 950 prompts (from a total of 1111) such that the correlation between the $\z_{\stereotype}$ and $y_{\fact}$ score is 0. This is in order to disentangle the relationship between each score and our variable of interest $\mathrm{odds}_{he/she, i} $ as defined in \eqref{eq:logodds}.

\begin{table}[H]
\centering
\caption{The 11 prompt templates used for the \textit{Profession dataset}}
\label{tab:profession_dataset_templates}
\begin{tabular}{l}
\toprule
\textbf{Prompt}                                         \\
\hline
The \textless{}profession\textgreater wanted that       \\
The \textless{}profession\textgreater laughed because   \\
The \textless{}profession\textgreater went home because \\
The \textless{}profession\textgreater desired that      \\
The \textless{}profession\textgreater wished that       \\
The \textless{}profession\textgreater cried because     \\
The \textless{}profession\textgreater ate because       \\
The \textless{}profession\textgreater said that         \\
The \textless{}profession\textgreater ran because       \\
The \textless{}profession\textgreater stayed up because \\
The \textless{}profession\textgreater whispered that   \\
\bottomrule
\end{tabular}
\end{table}

\textit{Winobias dataset}: the original dataset from \citet{zhao-etal-2018-gender} consists of sentences that were created to follow two prototypical templates. We focus on the first prototypical format, which is
\begin{align*}
\label{eq:format_winobias}
    \text{[entity1] [interacts with] [entity2] [conjunction] [pronoun] [circumstances]}
\end{align*}

We use 792 sentences for the training set, and 792 sentences for the test set. Both the training and test set contain 396 sentences that are `anti-stereotypical', and 396 that are `pro-stereotypical'.  Both the training and test set contain 40 professions that are either filled in to [entity1] or [entity2] in the template above.

\textit{CelebA dataset}:  We downscale the images to 50 by 50 grey-scale images, flatten them to 2,500-dimensional vectors, and apply each projection to the raw pixels. We then subsample 10,000 images, and fit each projection method on this training set. 

\textit{Waterbirds dataset}: introduced by \citet{Sagawa_2020}, it is a combination of  the Places dataset \citep{zhou_2016} and the CUB dataset \citep{WelinderEtal2010}. A `water background' is set by selecting an image from the lake and ocean categories in the places dataset, and the `land background' is set based on the broadleaf and bamboo forest categories.  A waterbird/land is then pasted in front of the background. When creating new versions of the dataset, we change the $p(y_{\mathrm{bird}} = a \mid z_{\mathrm{back}} = a)$ for $a \in {0, 1}$ and keep the size of the training set at 4,775/1,199 for the training and validation set respectively. For the test set, we select 5,796 samples where $p(y_{\mathrm{bird}} = a \mid z_{\mathrm{back}} = a) = 0.5$.

\subsection{Details on models and training procedure}
\label{app:training}

\textit{BERT}: we use a pre-trained BERT model implemented  in the \verb+transformers+ package \citep{huggingface_transformers}: \verb+BertForSequenceClassification.from_pretrained("bert-base-uncased")+.  When finetuning on the \textit{Bias in bios} dataset, we use the same hyper-parameters as \citet{belrose_2023}, training  with a batch size of 16, learning rate of $10^{-5}$ and a weight decay of $10^{-6}$, using an SGD optimizer, for 2 epochs. 

\textit{Multilingual E5}: we use the multilingual E5 model as implemented in the \verb+transformers+ package \citep{huggingface_transformers}: \verb+ AutoModel.from_pretrained("multilingual-e5-base")+. When finetuning on the \textit{Multilingual text detoxification dataset}, we use a batch size of 16, a learning rate of $5*(10^{-5})$, and a weight decay of $10^{-2}$, using the AdamW optimizer \citep{loshchilov2019decoupledweightdecayregularization}, for 5 epochs.

\textit{Llama models}: we use the base model of Llama 2 7B, Llama 2 13B and Llama 3.1 8B as available on Huggingface. We determine each projection using the embeddings of the last token of a prompt. During test time, we apply the projection to each token, after the embeddings are normalized via the $\mathrm{RMSNorm}$ operation. When applying the projection to multiple layers, we start at the earliest layer, and calculate the projection. Then, when calculating the projection for the next layer, we apply the projection from the earlier layer, and so forth.  

\textit{Last layer re-training}: When re-training the last layer. In this case, we run gradient descent (GD)  using the standard implementation of \verb+SGDClassifier+ from \verb+scikit-learn+. We select the strength of the $l_2$ from $\{1, 0.1, 0.01, 0.001, 0.0001\}$ and select the best value based on the worst-group accuracy on the validation set. We use the original parameters of the last layer as a starting value. We fit the  \verb+SGDClassifier+ using a tolerance of $0.0001$ and run it for a maximum of 1000 epochs.\\

When implementing $\mathrm{DFR}_{\mathrm{TR}}$, we use a subsampled set from the training dataset where each group has an equal size. Groups are defined as possible combinations of the main-task and the concept (e.g. in the Waterbirds dataset, there are four groups, as $y_{\mathrm{bird}} \in \{0, 1\}$ and $z_{\mathrm{back}} \in \{0, 1\}$). For GDRO, we use a learning rate $\eta=0.1$ to update the weights per group after each gradient descent step, similar to \citet{Sagawa_2020}.

\textit{Vision datasets}: For the Waterbirds dataset, we use the ResNet50 architecture implemented in the \verb+torchvision+ package: \verb+torchvision.models.ResNet50(pretrained=True)+.  We finetune the model using the parameters of \citet{kirichenko2022last}: a learning rate of $10^{-3}$, a weight decay of $10^{-3}$, a batch size of 32, and for 100 epochs without early stopping. We use stochastic gradient descent (SGD) with a momentum parameter of $0.9$. For CelebA, we simply run a logistic regression, akin to last-layer retraining, on the downscaled grey-scale images.

\section{Ethical considerations }
\label{app:impact}

As with any technique that aims to ensure that the predictions of a machine learning (ML) model are fair, practitioners should exercise caution when deploying SPLINCE in real-world settings where decisions can affect people’s lives. Naturally, our work considers specific technical notions of fairness, and is evaluated on a limited number of datasets, that do not reflect all the considerations one should take into account in deployment. ``Fairness" is a multifaceted construct, and our approach addresses only certain dimensions. Consequently, practitioners must evaluate the performance of SPLINCE within their specific context, align it with the fairness notion(s) they prioritize, and remain alert to potential unintended consequences. Importantly,  SPLINCE targets a very specific definition of bias, quantified by the ability of a linear model to predict a protected attribute. The method is not necessarily expected to work for non-linear models, or for other definitions of fairness.

\end{document}